\definecolor{mycolor}{rgb}{0.122, 0.435, 0.698}
\newtheorem{theorem}{Theorem}
\newtheorem{definition}{Definition}
\newcommand{\bb}[1]{{\mathbb{#1}}}
\newcommand{\norm}[1]{{\lVert {#1} \rVert}}
\newcommand{\diff}{\mathrm{d}}
\def\eqref#1{Eq.(\ref{#1})}
\def\Eqref#1{Equation~(\ref{#1})}
\def\1{\bm{1}}
\def\rvc{{\mathbf{c}}}
\def\rvn{{\mathbf{n}}}
\def\rvw{{\mathbf{w}}}
\def\rvx{{\mathbf{x}}}
\def\rvy{{\mathbf{y}}}
\def\rvz{{\mathbf{z}}}
\def\vz{{\bm{z}}}
\def\mI{{\bm{I}}}
\DeclareMathAlphabet{\mathsfit}{\encodingdefault}{\sfdefault}{m}{sl}
\SetMathAlphabet{\mathsfit}{bold}{\encodingdefault}{\sfdefault}{bx}{n}
\def\gC{{\mathcal{C}}}
\def\gN{{\mathcal{N}}}
\def\gP{{\mathcal{P}}}
\def\gS{{\mathcal{S}}}
\def\gX{{\mathcal{X}}}
\def\gY{{\mathcal{Y}}}
\def\gZ{{\mathcal{Z}}}
\newcommand{\pdata}{p_{\rm{data}}}
\newcommand{\E}{\mathbb{E}}
\newcommand{\R}{\mathbb{R}}
\newcommand{\KL}{D_{\mathrm{KL}}}
\newcommand{\supp}{\mathrm{supp}}
\newcommand{\ptheta}{p_\theta}
\newcommand{\qphi}{q_\phi}
\title{D2C: Diffusion-Decoding Models for \\ Few-Shot Conditional Generation}
\author{%
  Abhishek Sinha\thanks{Equal contribution.} \\
  \normalfont {Department of Computer Science}\\
  \normalfont {Stanford University}\\
  \texttt {a7b23@stanford.edu}
  \And 
  Jiaming Song\footnotemark[1] \\
  \normalfont {Department of Computer Science}\\
  \normalfont {Stanford University}\\
  \texttt {tsong@cs.stanford.edu}
  \AND
  Chenlin Meng \\
  \normalfont {Department of Computer Science}\\
  \normalfont {Stanford University}\\
  \texttt {chenlin@cs.stanford.edu}
  \And
  Stefano Ermon 
  \\
  \normalfont {Department of Computer Science}\\
  \normalfont {Stanford University}\\
  \texttt {ermon@stanford.edu}
  \\
  \\
  }
\begin{document}

\maketitle

\newcommand{\pname}{Diffusion-Decoding generative models with Contrastive representations}

\begin{abstract}
Conditional generative models of high-dimensional images have many applications, but supervision signals from conditions to images can be expensive to acquire.
This paper describes Diffusion-Decoding models with Contrastive representations (D2C), a paradigm for training unconditional variational autoencoders (VAEs) for few-shot conditional image generation.
D2C uses a learned diffusion-based prior over the latent representations to improve generation and contrastive self-supervised learning to improve representation quality. 
D2C can adapt to novel generation tasks conditioned on labels or manipulation constraints, by learning from as few as 100 labeled examples. On conditional generation from new labels, D2C achieves superior performance over state-of-the-art VAEs and diffusion models. On conditional image manipulation, D2C generations are two orders of magnitude faster to produce over StyleGAN2 ones and are preferred by $50\% - 60\%$ of the human evaluators in a double-blind study.
\end{abstract}
\section{Introduction}
Generative models trained on large amounts of unlabeled data have achieved great success in various domains including images~\cite{brock2018large,karras2020analyzing,razavi2019generating,ho2020denoising}, text~\cite{li2020optimus,aneja2019sequential}, audio~\cite{dhariwal2020jukebox,ping2020waveflow,oord2016wavenet,mittal2021symbolic}, and graphs~\cite{grover2019graphite,niu2020permutation}.
However, downstream applications of generative models 
are often based on various conditioning signals, %
such as labels~\cite{mirza2014conditional}, text descriptions~\cite{mansimov2015generating}, reward values~\cite{you2018graph}, or similarity with existing data~\cite{isola2017image}. While it is possible to directly train conditional models, this often requires large amounts of paired data~\cite{lin2014microsoft,ramesh2021zero} that are costly to acquire. Hence, it would be desirable to learn 
conditional generative models using large amounts of unlabeled data and as little paired data as possible.

Contrastive self-supervised learning (SSL) methods can greatly reduce the need for labeled data in discriminative tasks by learning effective representations from unlabeled data~\cite{oord2018representation,he2019momentum,grill2020bootstrap}, and have also been shown to improve few-shot learning~\cite{henaff2020data}. It is therefore natural to ask if they can also be used to improve few-shot generation. Latent variable generative models (LVGM) are a natural candidate for this, since they already involve a low-dimensional, structured latent representation of the data they generate. 
However, popular LVGMs, such as generative adversarial networks (GANs,~\cite{goodfellow2014generative,karras2020analyzing}) and diffusion models~\cite{ho2020denoising,song2020denoising}, lack explicit 
tractable 
functions to map inputs to representations, making it difficult to optimize latent variables with SSL. %
Variational autoencoders (VAEs,~\cite{kingma2013auto,rezende2015variational}), on the other hand, can naturally adopt SSL through their encoder model, but they typically have worse sample quality.

\begin{figure}[H]
    \centering
    \includegraphics[width=\textwidth]{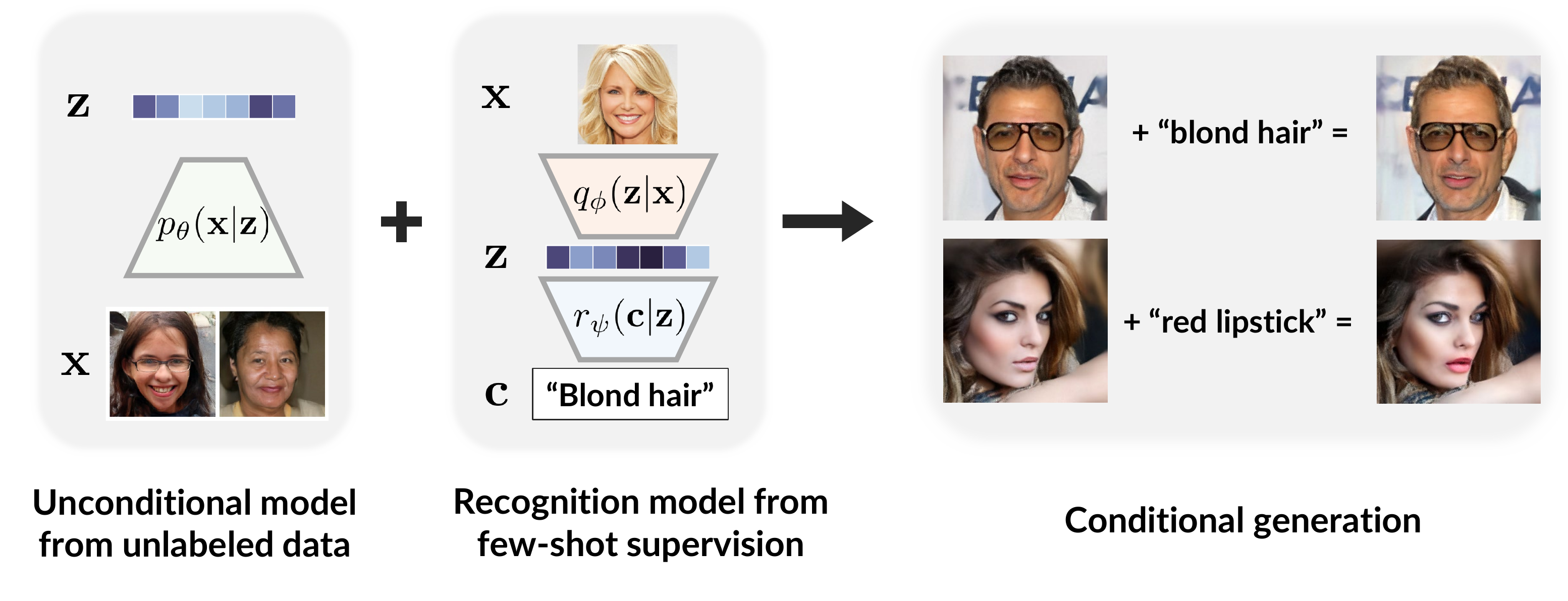}
    \caption{Few-shot conditional generation with the unconditional D2C model (left). With a recognition model over the latent space (middle), D2C can generate samples for novel conditions, such as image manipulation (right). These conditions can be defined with very few labels.}
    \label{fig:few-shot-intro}
\end{figure}

In this paper, we propose Diffusion-Decoding models with Contrastive representations (D2C), a special VAE that is suitable for conditional few-shot generation.
D2C uses contrastive self-supervised learning methods to obtain a latent space that inherits the transferrability and few-shot capabilities of self-supervised representations. 
Unlike other VAEs, D2C learns a diffusion model over the latent representations. 
This latent diffusion model ensures that D2C uses the same latent distribution for both training and generation. 
We provide a formal argument to explain why this approach may lead to better sample quality than existing  hierarchical VAEs. 
We further discuss how to apply D2C to few-shot conditional generation where the conditions are defined through labeled examples and/or manipulation constraints. Our approach combines a discriminative model providing conditioning signal and generative diffusion model over the latent space, and is computationally more efficient than methods that act directly over the image space (Figure~\ref{fig:few-shot-intro}).

We evaluate and compare D2C with several state-of-the-art generative models over 6 datasets. On unconditional generation, D2C outperforms state-of-the-art VAEs and is competitive with diffusion models under similar computational budgets. On conditional generation with 100 labeled examples, D2C significantly outperforms state-of-the-art VAE~\cite{vahdat2020nvae} and diffusion models~\cite{song2020denoising}. D2C can also learn to perform certain image manipulation tasks from as few as 100 labeled examples. Notably, for manipulating images, %
D2C is two orders of magnitude faster than StyleGAN2~\cite{zhu2020domain} and preferred by $50\% - 60\%$ of human evaluations, which to our best knowledge is the first for any VAE model.

\section{Background}

\paragraph{Latent variable generative models} A latent variable generative model (LVGM) is posed as a conditional distribution $\ptheta: \gZ \to \gP(\gX)$ from a latent variable $\rvz$ to a generated sample $\rvx$, parametrized by $\theta$. To acquire new samples, LVGMs draw random latent variables $\rvz$ from some distribution $p(\rvz)$ and map them to image samples through $\rvx \sim \ptheta(\rvx | \rvz)$. %
Most LVGMs are built on top of four paradigms: variational autoencoders (VAEs,~\cite{kingma2013auto,rezende2015variational}), Normalizing Flows (NFs,~\cite{dinh2014nice,dinh2016density}), Generative Adversarial Networks (GANs,~\cite{goodfellow2014generative}), and diffusion / score-based generative models~\cite{ho2020denoising,song2019generative}.%

Particularly, VAEs use an inference model from $\rvx$ to $\rvz$ for training. Denoting the inference distribution from $\rvx$ to $\rvz$ as $\qphi(\rvz | \rvx)$, the generative distribution from $\rvz$ to $\rvx$ as $\ptheta(\rvx | \rvz)$, VAEs are trained by minimizing the following upper bound of negative log-likelihood:
\begin{align}
    L_{\text{VAE}} = \E_{\rvx \sim \pdata}[\E_{\rvz \sim \qphi(\rvz | \rvx)}[-\log p(\rvx | \rvz)] + \KL(\qphi(\rvz | \rvx) \Vert p(\rvz))]
\end{align}
where $\pdata$ is the data distribution and $\KL$ is the KL-divergence.

\paragraph{Diffusion models}
Diffusion models~\cite{sohl-dickstein2015deep,ho2020denoising,song2020denoising} produce samples by reversing a Gaussian diffusion process. We use the index $\alpha \in [0, 1]$ to denote the particular noise level of an noisy observation $\rvx^{(\alpha)} = \sqrt{\alpha} \rvx + \sqrt{1 - \alpha} \epsilon$, where $\rvx$ is the clean observation and $\epsilon \sim \gN(0, \mI)$ is a standard Gaussian distribution; as $\alpha \to 0$, the distribution of $\rvx^{(\alpha)}$ converges to $\gN(0, \mI)$. Diffusion models are typically parametrized as reverse noise models $\epsilon_\theta(\rvx^{(\alpha)}, \alpha)$ that predict the noise component of $\rvx^{(\alpha)}$ given a noise level $\alpha$, and trained to minimize $\Vert \epsilon_\theta(\rvx^{(\alpha)}, \alpha) - \epsilon \Vert_2^2$, the mean squared error loss between the true noise and predicted noise. Given any non-increasing series $\{\alpha_i\}_{i=0}^{T}$ between 0 and 1, the diffusion objective for a clean sample from the data $\rvx$ is:
\begin{align}
    \ell_{\text{diff}}(\rvx; w, \theta) := \sum_{i=1}^{T} w(\alpha_i) \bb{E}_{\epsilon \sim \gN(0, \mI)}[\norm{\epsilon - \epsilon_{\theta}(\rvx^{(\alpha_i)}, \alpha_i)}_2^2] , \quad \rvx^{(\alpha_i)} := \sqrt{\alpha_i} \rvx + \sqrt{1 - \alpha_i} \epsilon \label{eq:diffusion-training-obj}
\end{align}
where $w: \{\alpha_i\}_{i=1}^{T} \to \R_{+}$ controls the loss weights for each $\alpha$. When $w(\alpha) = 1$ for all $\alpha$, we recover the denoising score matching objective for training score-based generative models \cite{song2019generative}.

Given an initial sample $\rvx_0 \sim \gN(0, \mI)$, diffusion models acquires clean samples (\textit{i.e.}, samples of $\rvx_1$) through a gradual denoising process, where samples with reducing noise levels $\alpha$ are produced (\textit{e.g.}, $\rvx_0 \to \rvx_{0.3} \to \rvx_{0.7} \to \rvx_1$). 
In particular, Denoising Diffusion Implicit Models (DDIMs, \cite{song2020denoising}) uses an Euler discretization of some neural ODE~\cite{chen2018neural} to produce samples (Figure~\ref{fig:d2}, left). 

We provide a more detailed description for training diffusion models in Appendix~\ref{app:diffusion-training} and sampling from DDIM in Appendix~\ref{app:ddim}. For conciseness, we use the notation $p^{(\alpha)}(\rvx^{(\alpha)})$ to denote the marginal distribution of $\rvx^{(\alpha)}$ under the diffusion model, and $p^{(\alpha_1, \alpha_2)}(\rvx^{(\alpha_2)}$ | $\rvx^{(\alpha_1)})$ to denote the diffusion sampling process from $\rvx^{(\alpha_1)}$ to $\rvx^{(\alpha_2)}$ (assuming $\alpha_1 < \alpha_2$). This notation abstracts away the exact sampling procedure of the diffusion model, which depends on choices of $\alpha$. %

\paragraph{Self-supervised learning of representations}
In self-supervised learning (SSL), representations are learned by completing certain pretext tasks that do not require extra manual labeling~\cite{noroozi2016unsupervised,devlin2018bert}; these representations can then be applied to other downstream tasks, often in few-shot or zero-shot scenarios.
In particular, contrastive representation learning encourages representations to be closer between ``positive'' pairs and further between ``negative'' pairs; contrastive predictive coding (CPC, \cite{oord2018representation}), based on multi-class classification, have been commonly used in state-of-the-art methods~\cite{he2019momentum,chen2020improved,chen2021an,chen2020a,song2020multi}. 
Other non-contrastive methods exist, such as BYOL~\cite{grill2020bootstrap} and SimSiam~\cite{chen2020exploring}, but they usually require additional care to prevent the representation network from collapsing.

\section{Problem Statement}
\paragraph{Few-shot conditional generation} %
Our goal is to learn an unconditional generative model $\ptheta(\rvx)$ such that it is suitable for conditional generation. Let $\gC(\rvx, \rvc, f)$ describe an event that ``$f(\rvx) = \rvc$'', where  $\rvc$ is a property value and $f(\rvx)$ is a property function that is unknown at training. In conditional generation, our goal is to sample $\rvx$ such that the event $\gC(\rvx, \rvc, f)$ occurs for a chosen $\rvc$. If we have access to some ``ground-truth'' model that gives us $p(\gC | \rvx) := p(f(\rvx) = \rvc | \rvx)$, then the conditional model can be derived from Bayes' rule:
$
    \ptheta(\rvx | \gC) \propto p(\gC | \rvx) \ptheta(\rvx)
$.
These properties $\rvc$ include (but are not limited to\footnote{When $\gC$ refers to an event that is always true, we recover unconditioned generation.}) labels~\cite{mirza2014conditional}, text descriptions~\cite{mansimov2015generating,reed2016generative}, noisy or partial observations~\cite{candes2006stable,asim2019invertible,kadkhodaie2020solving,daras2021intermediate}, and manipulation constraints~\cite{park2020swapping}. In many cases, we do not have direct access to the true $f(\rvx)$, so we need to learn an accurate model from labeled data~\cite{bartunov2018few} (\textit{e.g.}, $(\rvc, \rvx)$ pairs). %

\paragraph{Desiderata} 
Many existing methods are optimized for some known condition (\textit{e.g.}, labels in conditional GANs~\cite{brock2018large}) or assume abundant pairs between images and conditions that can be used for pretraining (\textit{e.g.}, DALL-E~\cite{ramesh2021zero} and CLIP~\cite{radford2021learning} over image-text pairs). Neither is the case in this paper, as we do not expect to train over paired data. %

While high-quality latent representations are not essential to unconditional image generation (\textit{e.g.}, autoregressive~\cite{oord2016pixel}, energy-based~\cite{du2019implicit}, and some diffusion models~\cite{ho2020denoising}), they can be beneficial when we wish to specify certain conditions with limited supervision signals, similar to how SSL representations can reduce labeling efforts in downstream tasks. A compelling use case is detecting and removing biases in datasets via image manipulation, where we should not only address well-known biases a-priori but also address other hard-to-anticipate biases, adapting to societal needs~\cite{najibi2020racial}. 

Therefore, a desirable generative model should not only have high sample quality but also contain informative latent representations. While VAEs are ideal for learning rich latent representations due to being able to incorporate SSL within the encoder, they generally do not achieve the same level of sample quality as GANs and diffusion models.

\newcommand{\CG}{{\color{teal}\ding{52}}}
\newcommand{\XR}{{\color{purple}\ding{55}}}

\begin{table}[]
    \centering
    \caption{
    A comparison of several common paradigms for generative modeling. [Explicit $\rvx \to \rvz$]: the mapping from $\rvx$ to $\rvz$ is directly trainable, which enables SSL; [No prior hole]: latent distributions used for generation and training are identical (Sec.~\ref{sec:prior-hole}), which improves generation; [Non-adversarial]: training procedure does not involve adversarial optimization, which improves training stability.}
    \vspace{.5em}
    \begin{tabular}{l|ccc}
    \toprule
      \multirow{2}{*}{\textbf{Model family}}   & Explicit $\rvx \to \rvz$ &  No prior hole & Non-Adversarial \\
         & (Enables SSL) & (Better generation) & (Stable training) \\\midrule
        VAE~\cite{kingma2013auto,rezende2015variational}, NF~\cite{dinh2014nice} & \CG & \XR & \CG \\
        GAN~\cite{goodfellow2014generative} & \XR & \CG & \XR \\
        BiGAN~\cite{donahue2016adversarial,dumoulin2016adversarially} & \CG & \CG & \XR \\
        DDIM~\cite{song2020denoising} & \XR & \CG & \CG \\\midrule
        \textbf{D2C} & \CG & \CG & \CG \\\bottomrule
    \end{tabular}
    
    \label{tab:summary}
\end{table}

\section{Diffusion-Decoding Generative Models with Contrastive Learning}
To address the above issue, we present Diffusion-Decoding generative models with Contrastive Learning (D2C), an extension to VAEs with high-quality samples and high-quality latent representations, and are thus well suited to few-shot conditional generation. Moreover, unlike GAN-based methods, D2C does not involve unstable adversarial training (Table~\ref{tab:summary}). 

As its name suggests, the generative model for D2C has two components -- \textit{diffusion} and \textit{decoding}; the \textit{diffusion} component operates over the latent space and the \textit{decoding} component maps from latent representations to images. Let us use the $\alpha$ index notation for diffusion random variables: $\rvz^{(0)} \sim p^{(0)}(\rvz^{(0)}) := \gN(0, \mI)$ is the ``noisy'' latent variable with $\alpha = 0$, and $\rvz^{(1)}$ is the ``clean'' latent variable with $\alpha = 1$. The generative process of D2C, which we denote $\ptheta(\rvx | \rvz^{(0)})$, is then defined as:
\begin{align}
    \rvz^{(0)} \sim p^{(0)}(\rvz^{(0)}), \quad \rvz^{(1)} \sim \underbrace{\ptheta^{(0,1)}(\rvz^{(1)} | \rvz^{(0)})}_{\text{diffusion}}, \quad \rvx \sim \underbrace{\ptheta(\rvx | \rvz^{(1)})}_{\text{decoding}},
\end{align}
where $p^{(0)}(\rvz^{(0)}) = \gN(0, \mI)$ is the prior distribution for the diffusion model, $\ptheta^{(0,1)}(\rvz^{(1)} | \rvz^{(0)})$ is the diffusion process from $\rvz^{(0)}$ to $\rvz^{(1)}$, and $\ptheta(\rvx | \rvz^{(1)})$ is the decoder from $\rvz^{(1)}$ to $\rvx$. Intuitively, D2C models produce samples by drawing $\rvz^{(1)}$ from a diffusion process and then decoding $\rvx$ from $\rvz^{(1)}$. 

In order to train a D2C model, we use an inference model $\qphi(\rvz^{(1)} | \rvx)$ that predicts proper $\rvz^{(1)}$ latent variables from $\rvx$; this can directly incorporate SSL methods~\cite{xie2021adversarial}, leading to the following objective:
\begin{align}
    L_{\text{D2C}}(\theta, \phi; w) & := L_\mathrm{D2}(\theta, \phi; w) + \lambda L_{\mathrm{C}}(q_\phi), \label{eq:d2c-obj}\\
    L_\mathrm{D2}(\theta, \phi; w) & := \E_{\rvx \sim \pdata, \rvz^{(1)} \sim \qphi(\rvz^{(1)} | \rvx)}[-\log p(\rvx | \rvz^{(1)}) + \ell_{\text{diff}}(\rvz^{(1)}; w, \theta)], 
\end{align}
where $\ell_{\text{diff}}$ is defined as in \eqref{eq:diffusion-training-obj}, $L_C(q_\phi)$ denotes any contrastive predictive coding objective~\cite{oord2018representation} with rich data augmentations \cite{he2019momentum,chen2020improved,chen2021an,chen2020a,song2020multi} (details in Appendix \ref{app:cpc}) and $\lambda > 0$ is a weight hyperparameter. The first two terms, which we call $L_{\mathrm{D2}}$, contains a ``reconstruction loss'' ($-\log p(\rvx | \rvz^{(1)})$) and a ``diffusion loss'' over samples of $\rvz^{(1)} \sim \qphi(\rvz^{(1)} | \rvx)$. We illustrate the D2C generative and inference models in Figure~\ref{fig:d2}, and its training procedure in Appendix~\ref{app:d2c-training}.

\begin{figure}
    \centering
    \includegraphics[width=0.8\textwidth]{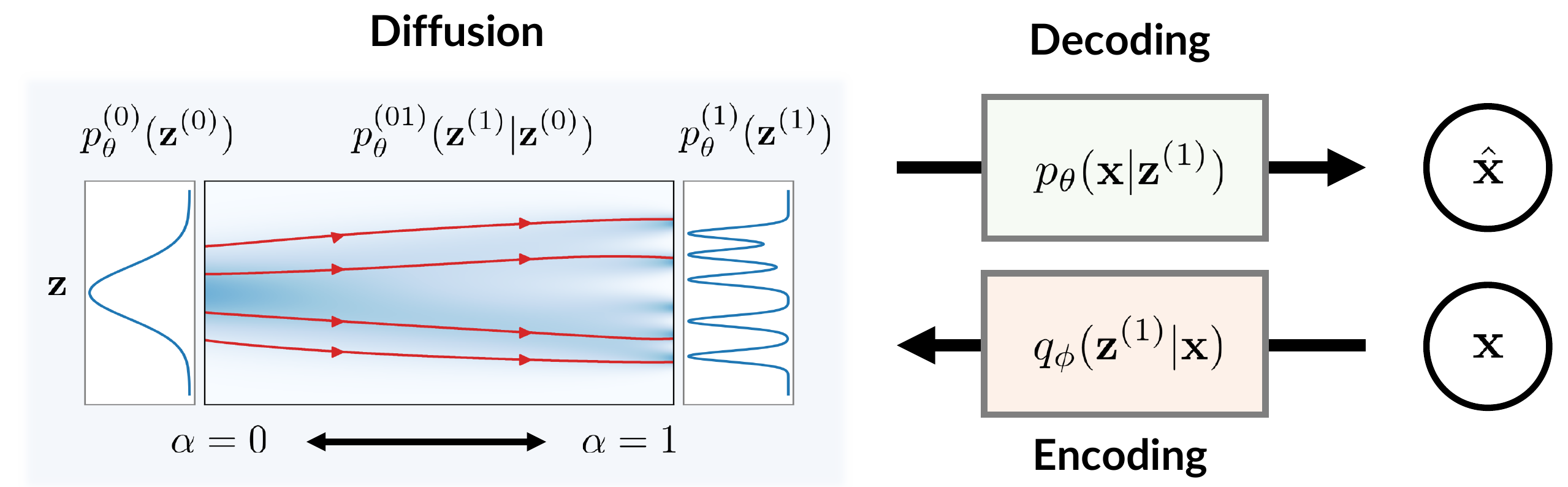}
    \caption{Illustration of components of a D2 model. On top of the encoding and decoding between $\rvx$ and $\rvz^{(1)}$, we use a diffusion model to generate $\rvz^{(1)}$ from a Gaussian $\rvz^{(0)}$. The red lines describe several smooth ODE trajectories from $\alpha = 0$ to $\alpha = 1$ corresponding to DDIM.}
    \label{fig:d2}
\end{figure}

\subsection{Relationship to maximum likelihood} 
The D2 objective ($L_{\mathrm{D2}}$) appears similar to the original VAE objective ($L_{\mathrm{VAE}}$). %
Here, we make an informal statement that the D2 objective function is deeply connected to the variational lower bound of log-likelihood; we present the full statement and proof in Appendix~\ref{app:mle}.

\begin{restatable}{theorem}{mleinf}(informal)
For any valid $\{\alpha_i\}_{i=0}^{T}$, there exists some weights  $\hat{w}: \{\alpha_i\}_{i=1}^{T} \to \R_{+}$ for the diffusion objective such that $-L_{\mathrm{D2}}$ is a variational lower bound to the log-likelihood, \textit{i.e.},
\begin{align}
    - L_{\mathrm{D2}}(\theta, \phi; \hat{w}) \leq  \bb{E}_{\pdata}[\log \ptheta(\rvx)],
\end{align}
where $\ptheta(\rvx) := \bb{E}_{\rvx_0 \sim p^{(0)}(\rvz^{(0)})}[\ptheta(\rvx | \rvz^{(0)})]$ is the marginal probability of $\rvx$ under the D2C model.
\end{restatable}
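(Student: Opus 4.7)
The plan is to derive the bound in two stages, applying a VAE-style ELBO to factor out the decoder term and then applying the standard diffusion ELBO to handle the latent prior term, finally matching the weighted sum of denoising losses to the diffusion objective $\ell_{\text{diff}}$.

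First I would rewrite the marginal as $\ptheta(\rvx) = \int \ptheta(\rvx \mid \rvz^{(1)}) p_\theta^{(1)}(\rvz^{(1)}) \, d\rvz^{(1)}$, where $p_\theta^{(1)}$ is the marginal density of the clean latent under the diffusion generative process. Applying Jensen's inequality with the inference distribution $q_\phi(\rvz^{(1)} \mid \rvx)$ gives the standard two-term bound
\begin{align}
\log \ptheta(\rvx) \geq \E_{q_\phi(\rvz^{(1)} \mid \rvx)}[\log \ptheta(\rvx \mid \rvz^{(1)})] - \KL\bigl(q_\phi(\rvz^{(1)} \mid \rvx) \,\Vert\, p_\theta^{(1)}(\rvz^{(1)})\bigr).
\end{align}
The first term is exactly the reconstruction term appearing inside $L_{\mathrm{D2}}$. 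The KL term splits into $-H(q_\phi(\rvz^{(1)} \mid \rvx)) - \E_{q_\phi}[\log p_\theta^{(1)}(\rvz^{(1)})]$, and the entropy piece does not depend on $\theta$ (nor on the diffusion weights), so it can be absorbed into a $\theta$-independent additive constant.

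Second, I would invoke the standard diffusion ELBO of Sohl-Dickstein et al. / Ho et al. applied to the generative model $p_\theta^{(0,1)}$ over the latent $\rvz^{(1)}$: given the forward noising schedule fixed by $\{\alpha_i\}_{i=0}^T$, the negative log-marginal admits a variational upper bound that telescopes into a sum of KL divergences between Gaussian forward posteriors and the learned reverse transitions, plus a prior-matching term at $\alpha_0$ and a terminal reconstruction term. Under the $\epsilon$-parametrization used in \eqref{eq:diffusion-training-obj}, each of those KLs reduces to a scaled squared-error term $w(\alpha_i)\,\bb{E}_\epsilon[\|\epsilon - \epsilon_\theta(\rvz^{(1,\alpha_i)}, \alpha_i)\|_2^2]$, where the weight $\hat{w}(\alpha_i)$ is fully determined (up to $\theta$-independent constants) by the chosen schedule $\{\alpha_i\}$ via the ratio of noise variances between consecutive steps. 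This yields
\begin{align}
-\log p_\theta^{(1)}(\rvz^{(1)}) \leq \ell_{\text{diff}}(\rvz^{(1)}; \hat{w}, \theta) + C(\{\alpha_i\}),
\end{align}
where $C$ does not depend on $\theta$ or $\phi$. Taking $\E_{\pdata \cdot q_\phi}$ on both sides and combining with the first bound produces $-L_{\mathrm{D2}}(\theta, \phi; \hat{w}) \leq \E_{\pdata}[\log \ptheta(\rvx)] + C'$ for a constant $C'$ collecting the entropy of $q_\phi$ and the prior/terminal constants from the diffusion ELBO; the formal statement in the appendix presumably either folds these into the variational lower bound convention or works with a deterministic encoder so that the entropy contribution vanishes.

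The main obstacle will be the second step: carefully identifying the exact weights $\hat{w}$ that make the telescoping KL chain over $\{\alpha_i\}$ align term-by-term with the weighted MSE form of $\ell_{\text{diff}}$, and verifying that the residual pieces (terminal reconstruction at $\alpha_T$, prior matching at $\alpha_0$, and the encoder entropy $H(q_\phi)$) are genuinely $\theta$-independent so they can be treated as constants without breaking the bound. The rest of the argument is routine application of Jensen's inequality and reparameterization of Gaussian KLs, but the bookkeeping for the weight schedule requires care since different conventions (DDPM vs. continuous-time vs. Appendix \ref{app:diffusion-training}) give superficially different but equivalent forms for $\hat{w}$.
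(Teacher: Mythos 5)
Your two-stage decomposition---Jensen's inequality over $\rvz^{(1)}$ followed by the DDPM-style ELBO on $-\log p_\theta^{(1)}(\rvz^{(1)})$---is exactly the route the paper takes, so the skeleton is right. The gap is in how you dispose of the leftover additive terms. You land on $-L_{\mathrm{D2}} \leq \E_{\pdata}[\log p_\theta(\rvx)] + C'$ and guess the formal statement either absorbs $C'$ into a convention or uses a deterministic encoder; neither is the case. The paper's formal theorem keeps the encoder entropy as an explicit term, proving $-L_{\mathrm{D2}}(\theta, \phi; \hat w) + H(q_\phi(\rvz^{(1)}|\rvx)) \leq \E_{\pdata}[\log p_\theta(\rvx)]$, and $H(q_\phi)$ is not harmlessly ``$\theta$-independent'': it depends on $\phi$, which is one of the two parameters the bound ranges over, so treating it as a droppable constant is not valid.

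More importantly, the diffusion-side residuals are removed exactly, not folded into a constant, and this is where the specific $\hat w$ earns its keep. (i) Because $\alpha_T = 0$, the prior-matching term at the noisy end, $\KL(q(\rvz^{(\alpha_T)}|\rvz^{(\alpha_0)}) \Vert p_\theta(\rvz^{(\alpha_T)}))$, is identically zero since both sides are $\gN(0, \mI)$. (ii) The terminal reverse kernel $p_\theta(\rvz^{(\alpha_0)}|\rvz^{(\alpha_1)})$ is taken to be Gaussian with standard deviation chosen so that its normalizing constant is exactly $1$; this makes $-\log p_\theta(\rvz^{(\alpha_0)}|\rvz^{(\alpha_1)})$ a pure weighted $\|\epsilon - \epsilon_\theta\|_2^2$ term with no log-normalizer left over, and it is precisely what determines $\hat w(\alpha_1)$. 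A minor indexing slip: in this paper $\alpha_0 = 1$ is the clean end, so the prior-matching term lives at $\alpha_T = 0$, not $\alpha_0$. Without (i) and (ii) the bound you derive is strictly weaker than the claim, so these two choices are the crux of pinning down $\hat w$ rather than bookkeeping to be verified afterward.
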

\begin{proof}(sketch)
The diffusion term $\ell_{\mathrm{diff}}$ upper bounds the KL divergence between $\qphi(\rvz_{1} | \rvx)$ and $p_\theta^{(1)}(\rvz^{(1)})$ for suitable weights~\cite{ho2020denoising,song2020denoising}, which recovers a VAE objective.
\end{proof}
\subsection{D2C models address latent posterior mismatch in VAEs} \label{sec:prior-hole}
While D2C is a special case of VAE, we argue that D2C is non-trivial in the sense that it addresses a long-standing problem in VAE methods~\cite{tomczak2017vae,takahashi2019variational}, namely the mismatch  between the prior distribution $\ptheta(\rvz)$ and the aggregate (approximate) posterior distribution $\qphi(\rvz) := \E_{\pdata(\rvx)}[\qphi(\rvz | \rvx)]$. A mismatch could create ``holes''~\cite{rosca2018distribution,hoffman2016elbo,aneja2020ncp} in the prior that the aggregate posterior fails to cover during training, resulting in worse sample quality, as many latent variables used during generation are likely to never have been trained on. We formalize this notion in the following definition. 
\begin{definition}[Prior hole] \label{def:prior-hole}
Let $p(\rvz), q(\rvz)$ be two distributions with $\supp(q) \subseteq \supp(p)$. We say that $q$ has an \textbf{$(\epsilon, \delta)$-prior hole} with respect to (the prior) $p$ for $\epsilon, \delta \in (0, 1)$, $\delta > \epsilon$, if there exists a set $S \in \supp(P)$, such that $\int_S p(\rvz) \diff \rvz \geq \delta$ and $\int_S q(\rvz) \diff \rvz \leq \epsilon$. 
\end{definition}
Intuitively, if $\qphi(\rvz)$ has a prior hole with large $\delta$ and small $\epsilon$ (\textit{e.g.}, inversely proportional to the number of training samples), then it is very likely that latent variables within the hole are never seen during training (small $\epsilon$), yet frequently used to produce samples (large $\delta$). Most existing methods address this problem by optimizing certain statistical divergences between $\qphi(\rvz)$ and $\ptheta(\rvz)$, such as the KL divergence or Wasserstein distance~\cite{tolstikhin2017wasserstein}. However, we argue in the following statement that prior holes might not be eliminated even if we optimize certain divergence values to be reasonably low, especially when $\qphi(\rvz)$ is very flexible. We present the formal statement and proof in Appendix~\ref{app:prior-hole}.
\begin{wrapfigure}[4]{r}{0.25\textwidth}
\centering
    \includegraphics[width=0.25\textwidth]{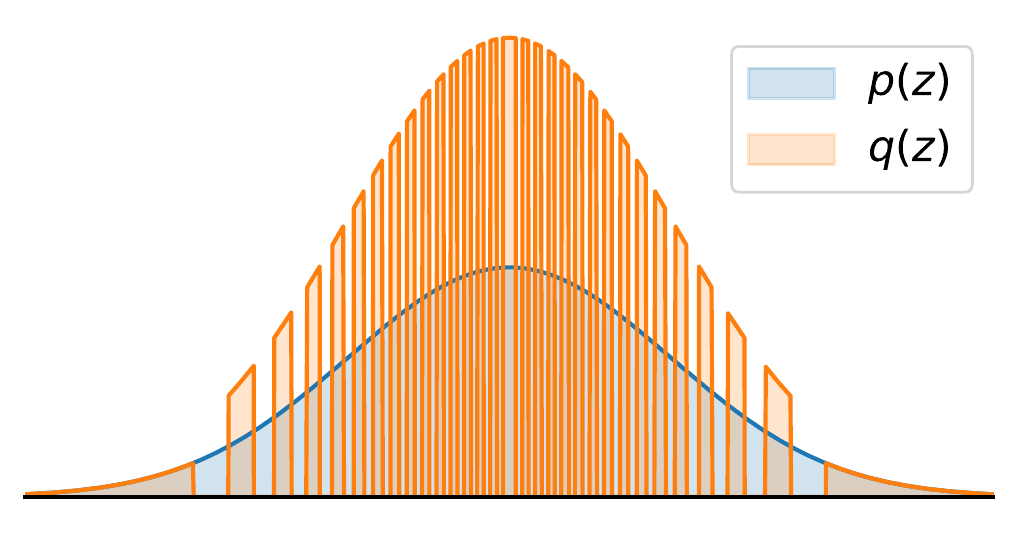}
\end{wrapfigure}
\begin{restatable}{theorem}{priorholeinf}(informal)
\label{thm:prior-hole-informal}
Let $\ptheta(\rvz) = \gN(0, 1)$. For any $\epsilon > 0$, there exists a distribution $\qphi(\rvz)$ with an $(\epsilon, 0.49)$-prior hole, such that $\KL(\qphi \Vert \ptheta) \leq \log 2$\footnote{This is reasonably low for realistic VAE models (NVAE~\cite{vahdat2020nvae} reports a KL divergence of around 2810 nats).}%
and $W_2(\qphi, \ptheta) < \gamma$ for any $\gamma > 0$, where $W_2$ is the 2-Wasserstein distance. 
\end{restatable}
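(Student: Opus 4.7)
The plan is to construct $q_\phi$ by locally redistributing the mass of $p_\theta$ in many small adjacent patches, so that $q_\phi$ vanishes on a prescribed set $S$ with $p_\theta(S)\geq 0.49$ yet remains close to $p_\theta$ in both KL and $W_2$. The guiding intuition is that $W_2$ only penalizes the \emph{distance} over which mass is transported, and can be made arbitrarily small by restricting transport to short hops, while a density ratio $q_\phi/p_\theta$ bounded by $2$ automatically gives $\KL(q_\phi\Vert p_\theta)\leq\log 2$.

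Fix $\epsilon,\gamma>0$ and pick a resolution $\eta\in(0,\gamma)$ (to be taken small enough later). First I would partition $\R$ into consecutive intervals $I_k$ of length $\eta$, then split each $I_k$ at its $p_\theta$-median into a left half $A_k$ and a right half $B_k$; by construction $p_\theta(A_k)=p_\theta(B_k)$ and both halves lie in an interval of length $\eta$. A greedy enumeration then selects a subcollection $\mathcal{K}$ of indices with $\sum_{k\in\mathcal{K}} p_\theta(A_k)\in[0.49,\,0.5]$, which is possible because each summand is at most $\eta/\sqrt{2\pi}$ and the unconstrained total equals $1/2$.

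Setting $S:=\bigcup_{k\in\mathcal{K}} A_k$ and $S^+:=\bigcup_{k\in\mathcal{K}} B_k$, I would define
\[
q_\phi(z):=\begin{cases}0 & z\in S,\\ 2\,p_\theta(z) & z\in S^+,\\ p_\theta(z) & \text{otherwise.}\end{cases}
\]
The equal-mass splits give $\int q_\phi=1$; since $q_\phi(S)=0\leq\epsilon$ while $p_\theta(S)\geq 0.49$, the $(\epsilon,0.49)$-prior hole is immediate. The density ratio $q_\phi/p_\theta$ takes only the values $0,1,2$, so
\[
\KL(q_\phi\Vert p_\theta) \;=\; \int_{S^+} 2p_\theta\log 2\,dz \;=\; 2\log 2\cdot p_\theta(S^+) \;\leq\; \log 2.
\]
For the Wasserstein bound I would use the coupling that leaves all mass outside $S\cup S^+$ in place and, pair by pair, transports the $p_\theta$-mass on $A_k$ onto the added mass on $B_k$ (say monotonically); every transported point moves at most $\eta$, yielding $W_2(q_\phi,p_\theta)\leq\eta<\gamma$.

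The main obstacle is conceptual rather than technical: one must realize that achieving a macroscopic prior hole together with arbitrarily small $W_2$ forces the hole region to be fragmented into many small pieces, because a single connected hole carrying $p_\theta$-mass $0.49$ would push $W_2$ to a fixed positive constant. Once the ``many small pairs'' geometry is identified, the rest is routine bookkeeping — median splits exist by continuity of $p_\theta$, the greedy selection hits $[0.49,\,0.5]$ for all sufficiently small $\eta$, and the KL and $W_2$ estimates follow immediately from the density-ratio bound and the local coupling.
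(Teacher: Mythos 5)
Your construction is essentially the same as the paper's: both of you realize that a macroscopic prior hole must be fragmented into many small adjacent pairs of regions, transfer the $p_\theta$-mass on one region of each pair onto the other, and observe that (i) the density ratio $q_\phi/p_\theta \le 2$ controls the KL by $\log 2$, and (ii) the per-pair transport distance, which scales as the pair width, controls $W_2$ and can be driven to zero by refining the tiling. The paper phrases this in $\R^d$ using concentric equal-$p_\theta$-mass rings inside a truncation radius $f^{-1}(2\delta)$ and moving mass from odd rings to even rings, whereas you phrase it in $\R$ using fixed-length intervals split at their $p_\theta$-median, plus a small greedy-selection step to hit a total mass in $[0.49,0.5]$. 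Your variant is a bit more elementary and matches the informal statement (the paper's informal theorem is stated for $\gN(0,1)$, with the $\R^d$ version in the appendix); the paper's ring construction generalizes directly to $\R^d$ without any selection step because the ring masses are exactly equal by design. Both KL and $W_2$ estimates are correct in your write-up, with the sole bookkeeping note that $p_\theta(A_k)\le \eta/(2\sqrt{2\pi})$ rather than $\eta/\sqrt{2\pi}$ (harmless, since you only need the summands to be uniformly small).
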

\vspace{-1em}
\begin{proof}(sketch)
We construct a $\qphi$ that satisfies these properties (top-right figure). First, we truncate the Gaussian and divide them into regions with same probability mass; then we support $\qphi$ over half of these regions (so $\delta > 0.49$); finally, we show that the divergences are small enough.
\end{proof}
In contrast to addressing prior holes by optimization, diffusion models eliminate prior holes by construction, since the diffusion process from $\rvz^{(1)}$ to $\rvz^{(0)}$ is constructed such that the distribution of $\rvz^{(\alpha)}$ always converges to a standard Gaussian as $\alpha \to 0$. As a result, the distribution of latent variables used during training is arbitrarily close to that used in generation\footnote{We expand this argument in Appendix~\ref{app:prior-hole}.}, which is also the case in GANs. %
Therefore, our argument provides an explanation as to why we observe better sample quality results from GANs and diffusion models than VAEs and NFs.

\section{Few-shot Conditional Generation with D2C}

In this section, we discuss how D2C can be used to learn to perform conditional generation from few-shot supervision. We note that D2C is only trained on images and not with any other data modalities (\textit{e.g.}, image-text pairs~\cite{ramesh2021zero}) or supervision techniques (\textit{e.g.}, meta-learning~\cite{clouatre2019figr,bartunov2018few}).

\begin{wrapfigure}[9]{R}{0.5\textwidth}
\centering
\vspace{-1.5em}
\begin{minipage}{0.5\textwidth}
\begin{algorithm}[H]
\small
    \caption{Conditional generation with D2C}
    \label{alg:few-shot-examples}
    \begin{algorithmic}[1]
    \State \textbf{Input} $n$ examples $\{(\rvx_i, \rvc_i)\}_{i=1}^{n}$, property $\rvc$.
    \State Acquire latents $\rvz^{(1)}_i \sim \qphi(\rvz^{(1)} | \rvx_i)$ for $i \in [n]$;
    \State Train model $r_\psi(\rvc | \rvz^{(1)})$ over $\{(\rvz^{(1)}_i, \rvc_i)\}_{i=1}^{n}$
    \State Sample latents with $\hat{\rvz}^{(1)} \sim r_\psi(\rvc | \rvz^{(1)}) \cdot p_\theta^{(1)}(\rvz^{(1)})$ (unnormalized);
    \State Decode $\hat{\rvx} \sim \ptheta(\rvx | \hat{\rvz}^{(1)})$.
    \State \textbf{Output} $\hat{\rvx}$.
    \end{algorithmic}
\end{algorithm}
\end{minipage} 
\end{wrapfigure}
\paragraph{Algorithm} We describe the general algorithm for conditional generation from a few images in Algorithm~\ref{alg:few-shot-examples}, and detailed implementations in Appendix~\ref{app:exp}. With a model over the latent space (denoted as $r_\psi(\rvc | \rvz^{(1)})$), we draw conditional latents from an unnormalized distribution with the diffusion prior (line 4). 
This can be implemented in many ways such as rejection sampling or Langevin dynamics~\cite{nguyen2017plug,song2020score,dhariwal2021diffusion}.

\paragraph{Conditions from labeled examples} Given a few labeled examples, %
we wish to produce diverse samples with a certain label. For labeled examples we can directly train a classifier over the latent space, which we denote as $r_\psi(\rvc | \rvz^{(1)})$ with $\rvc$ being the class label and $\rvz^{(1)}$ being the latent representation of $\rvx$ from $q_\phi(\rvz^{(1)} | \rvx)$.
If these examples do not have labels (\textit{i.e.}, we merely want to generate new samples similar to given ones), we can train a positive-unlabeled (PU) classifier~\cite{elkan2008learning} where we assign ``positive'' to the new examples and ``unlabeled'' to training data. Then we use the classifier with the diffusion model $\ptheta(\rvz^{(1)} | \rvz^{(0)})$ to produce suitable values of $\rvz^{(1)}$, such as by rejecting samples from the diffusion model that has a small $r_\psi(\rvc | \rvz^{(1)})$. %

\paragraph{Conditions from manipulation constraints} Given a few labeled examples, here we wish to learn how to manipulate images. Specifically, we condition over the event that ``$\rvx$ has label $\rvc$ but is similar to image $\bar{\rvx}$''. Here $r_\psi(\rvc | \rvz^{(1)})$ is the unnormalized product between the classifier conditional probability and closeness to the latent $\bar{\rvz}^{(1)}$ of $\bar{\rvx}$ (\textit{e.g.}, measured with RBF kernel). We implement line 4 of Alg.~\ref{alg:few-shot-examples} with a Lanvegin-like procedure where we take a gradient step with respect to the classifier probability and then correct this gradient step with the diffusion model. Unlike many GAN-based methods~\cite{chen2017photographic,portenier2018faceshop,wang2018pix2pixHD,isola2017image,xia2021tedigan}, D2C does not need to optimize an inversion procedure at evaluation time, and thus the latent value is much faster to compute; D2C is also better at retaining fine-grained features of the original image due to the reconstruction loss. %

\section{Related Work}

\paragraph{Latent variable generative models} Most deep generative models explicitly define a latent representation, except for some energy-based models~\cite{hinton2002training,du2019implicit} and autoregressive models~\cite{oord2016pixel,oord2016wavenet,brown2020language}. 
Unlike VAEs and NFs, GANs do not explicitly define an inference model and instead optimize a two-player game. In terms of sample quality, GANs currently achieve superior performance over VAEs and NFs, but they can be difficult to invert even with additional optimization~\cite{karras2017progressive,xu2020generative,bau2019seeing}. This can be partially addressed by training reconstruction-based losses with GANs~\cite{larsen2016autoencoding,li2017alice}. Moreover, the GAN training procedure can be unstable~\cite{brock2016neural,brock2018large,miyato2018spectral}, lack a informative objective for measuring progress~\cite{arjovsky2017wasserstein}, and struggle with discrete data~\cite{yu2017seqgan}. Diffusion models~\cite{dhariwal2021diffusion} achieves high sample quality without adversarial training, but its latent dimension must be equal to the image dimension.

\paragraph{Addressing posterior mismatch in VAEs}
Most methods address this mismatch problem by improving inference models~\cite{mohamed2016learning,kingma2016improved,tomczak2016improving}, prior models~\cite{tomczak2017vae,aneja2020ncp,takahashi2019variational}, or objective functions~\cite{zhao2017infovae,zhao2017towards,zhao2018a,alemi2017fixing,makhzani2015adversarial}; all these approaches optimize the posterior model to be close to the prior. 
In Section~\ref{sec:prior-hole}, we explain why these approaches do not necessarily remove large ``prior holes'', so their sample qualities remain relatively poor even after many layers~\cite{vahdat2020nvae,child2020very}.
Other methods adopt a ``two-stage'' approach~\cite{dai2019diagnosing}, which fits a generative model over the latent space of autoencoders~\cite{oord2017neural,razavi2019generating,dhariwal2020jukebox,ramesh2021zero}.

\paragraph{Conditional generation with unconditional models}
To perform conditional generation over an unconditional LVGM, most methods assume access to a discriminative model (\textit{e.g.}, a classifier); the latent space of the generator is then modified to change the outputs of the discriminative model. The disciminative model can operate on either the image space~\cite{nguyen2017plug,patashnik2021styleclip,dhariwal2021diffusion} or the latent space~\cite{shen2020interpreting,xia2021tedigan}.  
For image space discriminative models, 
plug-and-play generative networks~\cite{nguyen2017plug} control the attributes of generated images via Langevin dynamics~\cite{roberts1998optimal}; these ideas are also explored in diffusion models~\cite{song2020score}. 
Image manipulation methods are based on GANs
 often operate with latent space discriminators~\cite{shen2020interpreting,xia2021tedigan}. However, these methods have some trouble manipulating real images because of imperfect reconstruction~\cite{zhu2019lia,bau2019seeing}. This is not a problem in D2C since a reconstruction objective is optimized.

\section{Experiments}

We examine the conditional and unconditional generation qualities of D2C over CIFAR-10~\cite{krizhevsky2012imagenet}, CIFAR-100~\cite{krizhevsky2012imagenet}, fMoW~\cite{christie2018functional}, CelebA-64~\cite{liu2015deep}, CelebA-HQ-256~\cite{karras2017progressive}, and FFHQ-256~\cite{karras2018a}. 
Our D2C implementation is based on the state-of-the-art NVAE~\cite{vahdat2020nvae} autoencoder structure, the U-Net diffusion model~\cite{ho2020denoising}, and the MoCo-v2 contrastive representation learning method~\cite{chen2020improved}. 
We keep the diffusion series hyperparameter $\{\alpha_i\}_{i=1}^{T}$ identical to ensure a fair comparison with different diffusion models. 
For the contrastive weight hyperparameter $\lambda$ in \Eqref{eq:d2c-obj}, we consider the value of $\lambda = 10^{-4}$ based on the relative scale between the $L_{\mathrm{C}}$ and $L_{\mathrm{D2}}$; we find that the results are relatively insensitive to $\lambda$. We use 100 diffusion steps for DDIM and D2C unless mentioned otherwise, as running with longer steps is not computationally economical despite tiny gains in FID~\cite{song2020denoising}.
We include additional training details, such as architectures, optimizers and learning rates in Appendix~\ref{app:exp}.

\begin{table}[htbp]
\centering
\caption{Quality of representations and generations with LVGMs.}%
\label{tab:nvae-ddim-d2c-3-datasets}
\vspace{.2em}
\begin{tabular}{l|ccc|ccc|ccc}
\toprule
\multirow{2}*{Model} & \multicolumn{3}{c}{CIFAR-10} & \multicolumn{3}{|c}{CIFAR-100} & \multicolumn{3}{|c}{fMoW}\\
  & FID $\downarrow$ & MSE $\downarrow$ & Acc $\uparrow$  & FID $\downarrow$ & MSE $\downarrow$ & Acc $\uparrow$ & FID $\downarrow$ & MSE $\downarrow$ & Acc $\uparrow$\\\midrule
   NVAE~\cite{vahdat2020nvae} & 36.4 & \textbf{0.25} & 18.8 & 42.5 & 0.53 & 4.1 & 82.25 & \textbf{0.30} & 27.7\\
   DDIM~\cite{song2020denoising} & \textbf{4.16} & 2.5  & 22.5 & \textbf{10.16} & 3.2 & 2.2 & \textbf{37.74} & 3.0 & 23.5\\
    \midrule
     D2C (Ours) & 10.15 & 0.76 & \textbf{76.02}  & 14.62  & \textbf{0.44} & \textbf{42.75}  & 44.7  & 2.33 & \textbf{66.9}\\
    \bottomrule
\end{tabular}
\end{table}

\subsection{Unconditional generation}
For unconditional generation, we measure the sample quality of images using the Frechet Inception Distance (FID,~\cite{heusel2017gans}) with 50,000 images. In particular, we extensively evaluate NVAE~\cite{vahdat2020nvae} and DDIM~\cite{song2020denoising}, a competitive VAE model and a competitive diffusion model %
as baselines because we can directly obtain features from them without additional optimization steps\footnote{For DDIM, the latent representations $\rvx^{(0)}$ are obtained by reversing the neural ODE process.}.
For them, we report mean-squared reconstruction error (MSE, summed over all pixels, pixels normalized to $[0, 1]$) and linear classification accuracy (Acc., measured in percentage) over $\rvz_1$ features for the test set. 
\begin{figure}
    \centering
    \includegraphics[width=\textwidth]{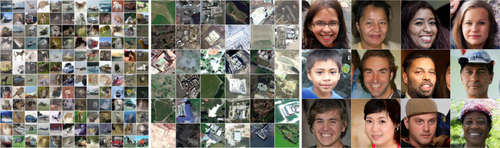}
    \caption{Generated samples on CIFAR-10 (left), fMoW (mid), and FFHQ $256 \times 256$ (right).}
    \label{fig:generated_small}
\end{figure}

We report sample quality results\footnote{Due to space limits, we place additional CIFAR-10 results in Appendix~\ref{app:results}.} in Tables~\ref{tab:nvae-ddim-d2c-3-datasets}, and~\ref{tab:large-datasets-fid}. For FID, we outperform NVAE in all datasets and outperform DDIM on CelebA-64 and CelebA-HQ-256, which suggests our results are competitive with state-of-the-art non-adversarial generative models.
In Table~\ref{tab:nvae-ddim-d2c-3-datasets}, we additionally compare NVAE, DDIM and D2C in terms of reconstruction and linear classification accuracy. As all three methods contain reconstruction losses, the MSE values are low and comparable. However, D2C enjoys much better linear classification accuracy than the other two thanks to the contrastive SSL component. We further note that training the same contrastive SSL method without $L_{\mathrm{D2}}$ achieves slightly higher $78.3\%$ accuracy on CIFAR-10. We tried improving this via ResNet~\cite{he2015deep} encoders, but this significantly increased reconstruction error, possibly due to loss of information in average pooling layers.

\begin{table}[htbp]
\centering
\caption{FID scores over different faces dataset with LVGMs.} %
\label{tab:large-datasets-fid}
\vspace{.2em}
\begin{tabular}{l|ccc}
\toprule
Model & CelebA-64 & CelebA-HQ-256 & FFHQ-256\\\midrule
   NVAE~\cite{vahdat2020nvae} & 13.48 & 40.26 & 26.02 \\
   DDIM~\cite{song2020denoising} & 6.53 & 25.6 & - \\
    \midrule
     D2C (Ours) & \textbf{5.7} & \textbf{18.74} & \textbf{13.04}\\
    \bottomrule
\end{tabular}
\end{table}

\begin{table}[H]
    \centering
    \caption{Sample quality as a function of diffusion steps.}
    \vspace{.2em}
    \label{tab:diffusion-steps-ablation}
    \begin{tabular}{l|ccc|ccc|ccc}
    \toprule
      & \multicolumn{3}{c|}{CIFAR-10} & \multicolumn{3}{c|}{CIFAR-100} & \multicolumn{3}{c}{CelebA-64} \\
     Steps & 10  & 50 & 100 & 10 & 50 & 100 & 10 & 50 & 100\\\midrule
     DDPM~\cite{ho2020denoising} & 41.07 & 8.01 & 5.78 & 50.27 & 21.37 & 16.72 & 33.12 & 18.48 & 13.93\\
     DDIM~\cite{song2020denoising} & \textbf{13.36} & \textbf{4.67} & \textbf{4.16} & \textbf{23.34} & \textbf{11.69} & \textbf{10.16} & 17.33 & 9.17 & 6.53 \\
     D2C (Ours) & 17.71 & 10.11 & 10.15 & 23.16 & 14.62 & 14.46 &  \textbf{17.32} & \textbf{6.8} & \textbf{5.7} \\
        \bottomrule
    \end{tabular}
\end{table}

\subsection{Few-shot conditional generation from examples}

We demonstrate the advantage of D2C representations by performing few-shot conditional generation over labels. We consider two types of labeled examples: one has binary labels for which we train a binary classifier; the other is positive-only labeled (\textit{e.g.}, images of female class) for which we train a PU classifier. Our goal here is to generate a diverse group of images with a certain label. We evaluate and compare three models: D2C, NVAE and DDIM. We train a classifier $r_\psi(\rvc | \rvz)$ over the latent space of these models; we also train a image space classifier and use it with DDIM (denoted as DDIM-I). We run Algorithm~\ref{alg:few-shot-examples} for these models, where line 4 is implemented via rejection sampling. As our goal is to compare different models, we leave more sophisticated methods~\cite{dhariwal2021diffusion} as future work.

We consider performing 8 conditional generation tasks over CelebA-64 with 2 binary classifiers (trained over 100 samples, 50 for each class) and 4 PU classifiers (trained over 100 positively labeled and 10k unlabeled samples).
We also report a ``naive'' approach where we use all the training images (regardless of labels) and compute its FID with the corresponding subset of images (\textit{e.g.}, all images versus blond images).
In Table~\ref{tab:few-shot-label}, we report the FID score between generated images (5k samples) and real images of the corresponding label. These results suggest that D2C outperforms the other approaches, and is the only one that performs better than the ``naive'' approach in most cases, illustrating the advantage of contrastive representations for few-shot conditional generation.
\begin{table}[h]
\centering
\caption{FID scores for few-shot conditional generation with various types of labeled examples. Naive performs very well for non-blond due to class percentages.
}
\label{tab:few-shot-label}
    \vspace{0.2em}
\begin{tabular}{l|l|cccc|c}
\toprule
{Method} & {Classes (\% in train set)}       &{D2C} & {DDIM} & {NVAE} & {DDIM-I} & Naive \\\midrule
\multirow{4}{*}{Binary} & \multicolumn{1}{l|}{Male ($42\%$)}      & \textbf{13.44}                & 38.38                 &  41.07               & 29.03     & 26.34                          \\
                                       & \multicolumn{1}{l|}{Female ($58\%$)}    & \textbf{9.51}                 & 19.25                 & 16.57                      & 15.17  & 18.72                                   \\ \cmidrule(l){2-7} 
                                       & \multicolumn{1}{l|}{Blond ($15\%$)}     & \textbf{17.61}                & 31.39                 &  31.24                     & 29.09 & 27.51                                  \\
                                       & \multicolumn{1}{l|}{Non-Blond ($85\%$)} & \textbf{8.94}                 & 9.67                 & 16.73                       & 19.76 & 3.77                                 \\ \midrule
\multirow{4}{*}{PU}     & \multicolumn{1}{l|}{Male ($42\%$)}      & \textbf{16.39}                & 37.03                 &      42.78                 & 19.60 & 26.34                                     \\
                                       & \multicolumn{1}{l|}{Female ($58\%$)}    & \textbf{12.21}                & 15.42                  &  18.36                     & 14.96 & 18.72                                      \\ \cmidrule(l){2-7} 
                                       & \multicolumn{1}{l|}{Blond ($15\%$)}     & \textbf{10.09}                & 30.20                 & 31.06                      & 76.52   & 27.51                               \\
                                       & \multicolumn{1}{l|}{Non-Blond ($85\%$)} & \textbf{9.09}                 & 9.70                &   17.98                    & 9.90 & 3.77\\                        \bottomrule         
\end{tabular}
\end{table}
\subsection{Few-shot conditional generation from manipulation constraints}

\begin{figure}
    \centering
    \includegraphics[width=\textwidth]{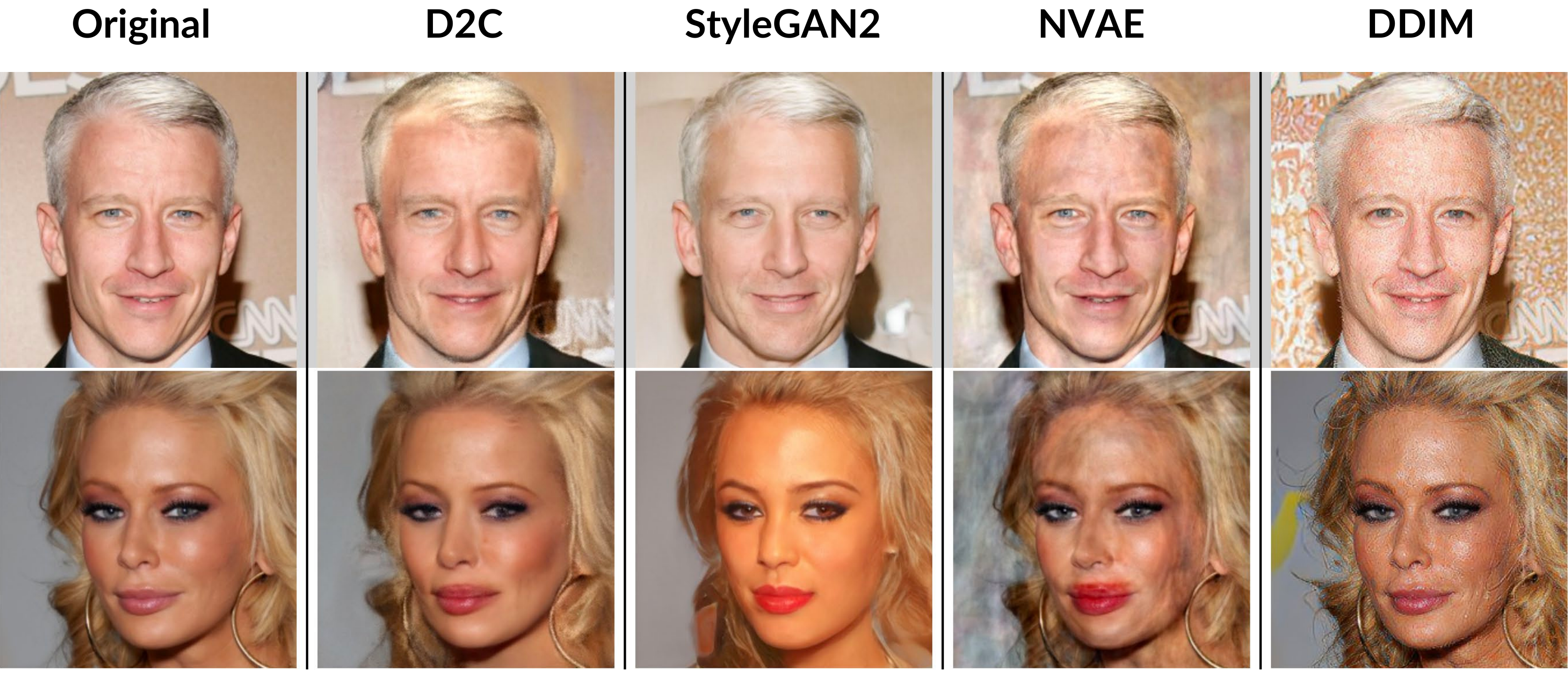}
    \caption{Image manipulation results for \textit{blond} (top) and \textit{red lipstick} (bottom). D2C is better than StyleGAN2 at preserving details of the original image, such as eyes, earrings, and background.}
    \label{fig:image-manip-demo}
\end{figure}

\begin{figure}
    \centering
    \includegraphics[width=0.9\textwidth]{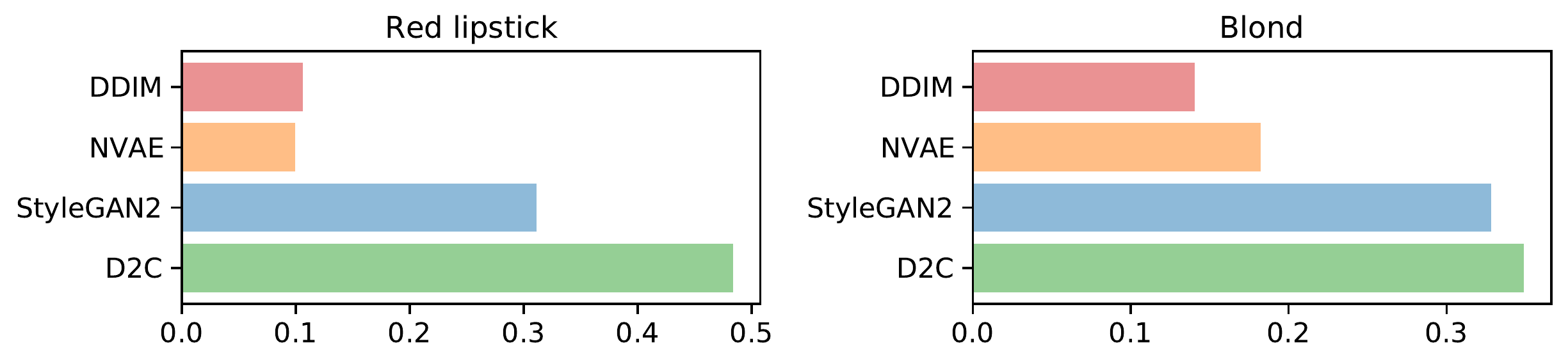}
    \caption{AMT evaluation over image manipulations. $x$-axis shows the percentage that the evaluator selects the image generated from the corresponding model out of 4 images from each model.}
    \label{fig:amt}
\end{figure}

Finally, we consider image manipulation where we use binary classifiers that are learned over 50 labeled instances for each class. We perform Amazon Mechanical Turk (AMT) evaluations over two attributes in the CelebA-256 dataset, \textit{blond} and \textit{red lipstick}, over D2C, DDIM, NVAE and StyleGAN2~\cite{karras2020analyzing} (see Figure~\ref{fig:image-manip-demo}). 
The evaluation is double-blinded: neither we nor the evaluators know the correspondence between generated image and underlying model during the study.
We include more details (algorithm, setup and human evaluation) in Appendix~\ref{app:exp} and additional qualitative results (such as \textit{beard} and \textit{gender} attributes) in Appendix~\ref{app:results}. 

In Figure~\ref{fig:amt}, we show the percentage of manipulations preferred by AMT evaluators for each model; D2C slightly outperforms StyleGAN2 for \textit{blond} and significantly outperforms StyleGAN2 for \textit{red lipstick}. When we compare D2C with only StyleGAN2, D2C is preferred over $51.5\%$ for \textit{blond} and $60.8\%$ for \textit{red lipstick}.
An additional advantage of D2C is that the manipulation is much faster than StyleGAN2, since the latter requires additional optimization over the latent space to improve reconstruction~\cite{zhu2020domain}. On the same Nvidia 1080Ti GPU, it takes 0.013 seconds to obtain the latent code in D2C, while the same takes 8 seconds~\cite{zhu2020domain} for StyleGAN2 ($615 \times$ slower). As decoding is very fast for both models, D2C generations are around two orders of magnitude faster to produce.

\section{Discussions and Limitations}\label{sec:discussion}
We introduced D2C, a VAE-based generative model with a latent space suitable for few-shot conditional generation. 
To our best knowledge, our model is the first unconditional VAE to demonstrate superior image manipulation performance than StyleGAN2, which is surprising given our use of a regular NVAE architecture. We believe that with better architectures, such as designs from StyleGAN2 or Transformers~\cite{hudson2021generative}, D2C can achieve even better performance.
It is also interesting to formally investigate the integration between D2C and other types of conditions on the latent space, as well as training D2C in conjunction with other domains and data modalities, such as text~\cite{ramesh2021zero}, in a fashion that is similar to semi-supervised learning. Nevertheless, we note that our model have to be used properly in order to mitigate potential negative societal impacts, such as deep fakes.

\section*{Acknowledgements}
This research was supported by NSF (\#1651565, \#1522054, \#1733686), ONR (N00014-19-1-2145), AFOSR (FA9550-19-1-0024), Amazon AWS, Stanford Institute for Human-Centered Artificial Intelligence (HAI), and Google Cloud.

\bibliographystyle{plain}
\bibliography{bib}

\newpage

\appendix

\section{Additional Details for D2C}

\subsection{Training diffusion models}
\label{app:diffusion-training}

We use the notations in \cite{song2020denoising} to denote the $\alpha$ values and consider the forward diffusion model in \cite{ho2020denoising}; a non-Markovian version that motivates other sampling procedures can be found in \cite{song2020denoising}, but the training procedure is largely identical. We refer to the reader to these two papers for more details.

First, we define the following diffusion forward process for a series $\{\alpha_t\}_{t=0}^{T}$:
\begin{gather}
    q(\rvx^{(\alpha_{1:T})} | \rvx^{(\alpha_{0})}) := \prod_{t=1}^{T} q(\rvx^{(\alpha_t)} | \rvx^{(\alpha_{t-1})}), \\
    q(\rvx^{(\alpha_t)} | \rvx^{(\alpha_{t-1})}) := \gN\left(\sqrt{\frac{\alpha_t}{\alpha_{t-1}}} \rvx^{(\alpha_{t-1})}, \left(1 - \frac{\alpha_t}{\alpha_{t-1}}\right) \mI\right), \label{eq:diff-ho}
\end{gather}
and from standard derivations for Gaussian we have that:
\begin{align}
    q(\rvx^{(\alpha_{t-1})} | \rvx^{(\alpha_{t})}, \rvx^{(\alpha_{0})}) = \gN\Bigg(\underbrace{\frac{\sqrt{\alpha_{t-1} - \alpha_t}}{1 - \alpha_t} \rvx^{(\alpha_{0})} + \frac{\alpha_t (1 - \alpha_{t-1})}{\alpha_{t-1} (1 - \alpha_t)} \rvx^{(\alpha_{t})}}_{ \tilde{\mu}(\rvx^{(\alpha_{t})}, \rvx^{(\alpha_{0})}; \alpha_t, \alpha_{t-1})}, \frac{1 - \alpha_{t-1}}{1 - \alpha_t} \left({1 - \frac{\alpha_t}{\alpha_{t-1}}}\right)\mI\Bigg). \label{eq:q-reverse}
\end{align}
As a variational approximation to the above, \cite{ho2020denoising} considered a specific type of $p_\theta(\rvx^{(\alpha_{t-1})} | \rvx^{(\alpha_{t})})$:
\begin{align}
    p_\theta(\rvx^{(\alpha_{t-1})} | \rvx^{(\alpha_{t})}) = \gN\left({ \mu_\theta(\rvx^{(\alpha_{t})}; \alpha_{t}, \alpha_{t-1})}, (\sigma^{(\alpha_t)})^2 \mI\right), \label{eq:p-diffusion}
\end{align}
where $\mu_\theta$ and $\sigma^{(\alpha_t)}$ are parameters, and we remove the superscript of $\ptheta$ to indicate that there are no additional discretization steps in between (the sampling process is explicitly defined). 
Then, we have the standard variational objective as follows:
\begin{align}
   { L} & := \bb{E}_{q} \left[\log q(\rvx^{(\alpha_{T})} | \rvx^{(\alpha_{0})}) + \sum_{t=2}^{T} \log q(\rvx^{(\alpha_{t-1})} | \rvx^{(\alpha_{t})}, \rvx^{(\alpha_{0})}) - \sum_{t=1}^{T} \log p_\theta^{(\alpha_t, \alpha_{t-1})}(\rvx^{(\alpha_{t-1})} | \rvx^{(\alpha_{t})}) \right] \nonumber \\
   & \equiv \bb{E}_{q} \left[\sum_{t=2}^{T} \underbrace{\KL(q(\rvx^{(\alpha_{t-1})} | \rvx^{(\alpha_{t})}, \rvx^{(\alpha_{0})})) \Vert p_\theta(\rvx^{(\alpha_{t-1})} | \rvx^{(\alpha_{t})}))}_{ L_{t-1}} - \log p_\theta(\rvx^{(\alpha_{0})} | \rvx^{(\alpha_1)}) \right], \nonumber
\end{align}
where $\equiv$ denotes ``equal up to a constant that does not depend on $\theta$'' and each ${ L_{t-1}}$ is a KL divergence between two Gaussian distributions. Let us assume that the standard deviation of $p_\theta(\rvx^{(\alpha_{t-1})} | \rvx^{(\alpha_{t})})$ is equal to that of $q(\rvx^{(\alpha_{t-1})} | \rvx^{(\alpha_{t})}, \rvx^{(\alpha_{0})}))$, which we denote as $\sigma^{(\alpha_t)}$. And thus:
\begin{align}
    { L_{t-1}} = \bb{E}_q\left[\frac{1}{2(\sigma^{(\alpha_t)})^2} \norm{{ \mu_\theta(\rvx^{(\alpha_{t})}; \alpha_{t}, \alpha_{t-1})} -  \tilde{\mu}(\rvx^{(\alpha_{t})}, \rvx^{(\alpha_{0})}; \alpha_t, \alpha_{t-1})}_2^2\right]. \label{eq:deriv-start}
\end{align}
With a particular reparametrization from $\mu_\theta$ to $\epsilon_\theta$ (which tries to model the noise vector at $\alpha_t$):
\begin{align}
    \mu_\theta(\rvx^{(\alpha_{t})}; \alpha_{t}, \alpha_{t-1}) = \sqrt{\frac{\alpha_{t-1}}{\alpha_t}} \left(\rvx^{(\alpha_{t})} - \frac{\sqrt{\alpha_{t-1} - \alpha_t}}{\sqrt{(1 - \alpha_t) \alpha_t}} \cdot \epsilon_\theta(\rvx^{(\alpha_{t})}; \alpha_{t})\right),
\end{align}
the objective function can be simplified to:
\begin{align}
    { L_{t-1}} = \bb{E}_{\rvx_0, \epsilon}\left[\frac{(\alpha_{t-1} - \alpha_t)}{2(\sigma^{(\alpha_t)})^2 (1 - \alpha_t) \alpha_t} \norm{\epsilon - { \epsilon_\theta(\rvx^{(\alpha_{t})}; \alpha_{t}, \alpha_{t-1})}}_2^2\right] \label{eq:deriv-end}
\end{align}
where $\rvx^{(\alpha_{t})} = \sqrt{\alpha_t} \rvx_0 + \sqrt{1 - \alpha_t} \epsilon$. 
Intuitively, this is a weighted sum of mean-square errors between the noise model $\epsilon_\theta$ and the actual noise $\epsilon$. Other weights can also be derived with different forward processes that are non-Markovian~\cite{song2020denoising}, and in practice, setting the weights to 1 is observed to achieve decent performance for image generation.

\subsection{DDIM sampling procedure}
\label{app:ddim}

In this section, we discuss the detailed sampling procedure from $\rvx^{(0)} \sim \gN(0, \mI)$ (which is the distribution with ``all noise''\footnote{Technically, the maximum noise level $\alpha_T$ should have $\alpha_T \to 0$ but not equal to 0, but we can approximate the distribution of $\rvx^{(\alpha_T)}$ with that of $\rvx^{(0)}$ arbitrarily well in practice.}) to $\rvx^{(1)}$ (which is the model distribution with ``no noise''). More specifically, we discuss a deterministic sampling procedure, which casts the generation procedure as an implicit model~\cite{song2020denoising}. Compared to other procedures (such as the one in DDPM~\cite{ho2020denoising}), this has the advantage of better sample quality when few steps are allowed to produce each sample, as well as a near-invertible mapping between $\rvx^{(0)}$ and $\rvx^{(1)}$. We describe this procedure in Algorithm~\ref{alg:ddim-sampling}, where we can choose different series of $\alpha$ to control how many steps (and through which steps) we wish to draw a sample. 
The DDIM sampling procedure corresponds to a particular  discretization to an ODE, we note that it is straightforward to also define the sampling procedure between any two $\alpha$ values. Similarly, given an observation $\rvx^{(1)}$ we can obtain the corresponding latent code $\rvx^{(0)}$ by sampling running Algorithm~\ref{alg:ddim-sampling} with the sequence of $\alpha$ reversed.

\begin{algorithm}[H]
    \small
    \caption{Sampling with the DDIM procedure}
    \label{alg:ddim-sampling}
    \begin{algorithmic}[1]
    \State \textbf{Input}: non-increasing series $\{\alpha_t\}_{t = 0}^{T}$ with $\alpha_T = 0$ and $\alpha_0 = 1$.
    \State Sample $\rvx^{(1)} \sim \gN(0, \mI)$.
    \For{$k \gets T$ to $1$} 
    \State Update $\rvx^{(\alpha_{t-1})}$ from $\rvx^{(\alpha_{t})}$ such that
    $$
    \sqrt{\frac{1}{\alpha_{t-1}}} \rvx^{(\alpha_{t-1})} = \sqrt{\frac{1}{\alpha_t}} \rvx^{(\alpha_t)} + \left(\sqrt{\frac{1 - \alpha_{t-1}}{\alpha_{t-1}}} - \sqrt{\frac{1 - \alpha_{t}}{\alpha_t}}\right) \cdot \epsilon_\theta(\rvx^{(\alpha_{t})}; \alpha_{t})
    $$
    \EndFor
    \State \textbf{Output} $\rvx^{(0)}$.
    \end{algorithmic}
\end{algorithm}

\subsection{Contrastive representation learning}
\label{app:cpc}

\newcommand{\fij}{g(\rvy_i,\overline{\rvw_{i,j}})}
\newcommand{\fjk}{g(\rvy_j,\overline{\rvw_{j,k}})}
\newcommand{\fii}{g(\rvy_i,\rvw_i)}
\newcommand{\fjj}{g(\rvy_j,\rvw_j)}
\newcommand{\xiyi}{(\rvy_i, \rvw_i)}
\newcommand{\xiyj}{(\rvy_i, \overline{\rvw_{i,j}})}
\newcommand{\pxy}[1]{p^{#1}(\rvy, \rvw)}
\newcommand{\py}[1]{p^{#1}(\rvw)}
\newcommand{\px}[1]{p^{#1}(\rvy)}
\newcommand{\pxpy}[1]{p^{#1}(\rvy)p^{#1}(\rvw)}
\newcommand{\rij}{r(\rvw_{i,j})}
\newcommand{\rjk}{r(\rvw_{j,k})}
\newcommand{\rii}{r(\rvy_i)}
\newcommand{\rjj}{r(\rvy_j)}
\newcommand{\aug}{\mathrm{aug}}

In contrastive representation learning, the goal is to distinguish a \textit{positive} pair $(\rvy, \rvw) \sim \pxy{}$ from $(m-1)$ \textit{negative} pairs $(\rvy, \overline{\rvw}) \sim \pxpy{}$. In our context, the positive pairs are representations from the same image, and negative pairs are representations from different images; these images are pre-processed with strong data augmentations~\cite{chen2020a} to encourage rich representations. With two random, independent data augmentation procedures defined as $\aug_1$ and $\aug_2$, we define $\pxy{}$ and $\pxpy{}$ via the following sampling procedure:
\begin{gather*}
    (\rvy, \rvw) \sim \pxy{}: \rvy \sim q_\phi(\vz^{(1)} | \aug_1(\rvx)), \rvw \sim q_\phi(\vz^{(1)} | \aug_2(\rvx)), \rvx \sim \pdata(\rvx), \\
    (\rvy, \rvw) \sim \pxpy{}: \rvy \sim q_\phi(\vz^{(1)} | \aug_1(\rvx_1)), \rvw \sim q_\phi(\vz^{(1)} | \aug_2(\rvx_2)), \rvx_1, \rvx_2 \sim \pdata(\rvx).
\end{gather*}

For a batch of $n$ positive pairs $\{(\rvy_i, \rvw_i)\}_{i=1}^{n}$, the contrastive predictive coding (CPC, \cite{oord2018representation}) objective is defined as:
\begin{align}
    L_{\mathrm{CPC}}(g; q_\phi) := \mathop{\E}\Bigg[\frac{1}{n} \sum_{i=1}^n \log{\frac{m \cdot \fii}{\fii + \sum_{j=1}^{m-1} \fij}}\Bigg]
    \label{eq:cpc}
\end{align}
for some positive critic function $g: \gY \times \gZ \to \R_{+}$, where the expectation is taken over $n$ positive pairs $\xiyi \sim \pxy{}$ and $n (m-1)$ negative pairs $\xiyj \sim \pxpy{}$. Another interpretation to CPC is that it performs $m$-way classification where the ground truth label is assigned to the positive pair.
The representation learner $q_\phi$ then aims to maximize the CPC objective, or to minimize the following objective:
\begin{align}
   - L_{\mathrm{C}}(q_\phi) := \min_{g} - L_{\mathrm{CPC}}(g; q_\phi),
\end{align}
Different specific implementations, such as MoCo~\cite{he2019momentum,chen2020improved,chen2021an} and SimCLR~\cite{chen2020a} can all be treated as specific implementations of this objective function. In this paper, we considered using MoCo-v2~\cite{chen2020a} as our implementation for $L_{\mathrm{C}}$ objective; in principle, other implementations to CPC can also be integrated into D2C as well.

\subsection{Training D2C}
\label{app:d2c-training}

In Algorithm~\ref{alg:d2c-train}, we describe a high-level procedure that trains the D2C model; we note that this procedure does not have any adversarial components. On the high-level, this is the integration of three objectives: the reconstruction objective via the autoencoder, the diffusion objective over the latent space, and the contrastive objective over the latent space. In principle, the \textbf{\color{yellow!50!black}[reconstruction]}, \textbf{\color{blue!50!black}[constrastive]}, and \textbf{\color{teal!50!black}[diffusion]} components can be optimized jointly or separately; we observe that normalizing the latent $\rvz^{(1)}$ with a global mean and standard deviation before applying the diffusion objective helps learning the diffusion model with a fixed $\alpha$ series.

\begin{algorithm}[H]
    \small
    \caption{Training D2C}
    \label{alg:d2c-train}
    \begin{algorithmic}
    \State \textbf{Input}: Data distribution $\pdata$.
    \While{training} \State
    \begin{tcolorbox}[colback=red!5!white,colframe=red!50!black!50!,left=2pt,right=2pt,top=0pt,bottom=1pt,
  colbacktitle=red!25!white,title=\textbf{\color{black} [Draw samples with data augmentation]}]
    \State Draw $m$ samples $\rvx_{0:m-1} \sim \pdata(\rvx)$.
    \State Draw $(m+1)$ data augmentations $\aug_{0}, \ldots \aug_{m-1}$ and $\overline{\aug}_{0}$.
    \For{$i \gets 0$ to $m-1$}
    \State Draw $\rvz_{i}^{(1)} \sim \qphi(\rvz^{(1)} | \aug_i(\rvx))$.
    \EndFor
    \State Draw $\overline{\rvz}_{0}^{(1)} \sim \qphi(\rvz^{(1)} | \overline{\aug}_0(\rvx))$.
    \end{tcolorbox}
    \begin{tcolorbox}[colback=yellow!5!white,colframe=yellow!50!black!50!,left=2pt,right=2pt,top=0pt,bottom=1pt,
  colbacktitle=yellow!25!white,title=\textbf{\color{black} [Reconstruction]}]
    \State Reconstruct $\rvx_0 \sim \ptheta(\rvx | \rvz_0^{(1)})$
    \State Minimize $L_{\mathrm{recon}} = - \log \ptheta(\rvx | \rvz_0^{(1)})$ over $\theta$ and $\phi$ with gradient descent.
    \end{tcolorbox}
    \begin{tcolorbox}[colback=blue!5!white,colframe=blue!50!black!50!,left=2pt,right=2pt,top=0pt,bottom=1pt,
  colbacktitle=blue!25!white,title=\textbf{\color{black} [Contrastive]}]
    \State Define a classification task: assign label 1 to $({\rvz}_{0}^{(1)}, \overline{\rvz}_{0}^{(1)})$ and label 0 to $({\rvz}_{0}^{(1)}, {\rvz}_{i}^{(1)})$ for $i \neq 0$.
    \State Define $L_{\mathrm{CPC}}(g; \qphi)$ as the loss to minimize for the above task, with $g$ as the classifier.
    \State Define $\hat{g}$ as a minimizer to the classifier objective $L_{\mathrm{CPC}}(g; \qphi)$.
    \State Minimize $L_{\mathrm{CPC}}(\hat{g}; \qphi)$ over $\phi$ with gradient descent.
    \end{tcolorbox}
    \begin{tcolorbox}[colback=teal!5!white,colframe=teal!50!black!50!,left=2pt,right=2pt,top=0pt,bottom=1pt,
  colbacktitle=teal!25!white,title=\textbf{\color{black} [Diffusion]}]
    \State Sample $\epsilon \sim \gN(0, I)$, $t \sim \mathrm{Uniform}(1, \ldots, T)$.
    \State Define $\vz_0^{(\alpha_t)} = \sqrt{\alpha_t} \vz^{(0)}_0 + \sqrt{1 - \alpha_t} \epsilon$.
    \State Minimize $\norm{\epsilon - \epsilon_\theta(\vz_0^{(\alpha_t)}; \alpha_t)}_2^2$ over $\theta$ with gradient descent.
    \end{tcolorbox}
    \EndWhile
    \end{algorithmic}
\end{algorithm}

\subsection{Few-shot conditional generation}
\label{app:few-shot}

In order to perform few-shot conditional generation, we need to implement line 4 in Algorithm~\ref{alg:few-shot-examples}, where an unnormalized (energy-based) model is defined over the representations. After we have defined the energy-based model, we implement a procedure to draw samples from this unnormalized model. We note that our approach (marked in teal boxes) is only one way of drawing valid samples, and not necessarily the optimal one. Furthermore, these implementations can also be done over the image space (which is the case for DDIM-I), which may costs more to compute than over the latent space since more layers are needed in a neural network to process it.

For generation from labels, we would define the energy-based model over latents as the product of two components: the first is the ``prior'' over $\rvz^{(1)}$ as defined by the diffusion model and the second is the ``likliehood'' of the label $\rvc$ being true given the latent variable $\rvz^{(1)}$. This places high energy values to the latent variables that are likely to occur under the diffusion prior (so generated images are likely to have high quality) as well as latent variables that have the label $\rvc$. To sample from this energy-based model, we perform a rejection sampling procedure, where we reject latent samples from the diffusion model that have low discrminator values. This procedure is describe in Algorithm~\ref{alg:few-shot-labels}.

\begin{center}
\begin{minipage}{0.5\textwidth}
\begin{algorithm}[H]
\small
    \caption{Generate from labels}
    \label{alg:few-shot-labels}
    \begin{algorithmic}
    \State \textbf{Input} model $r_\psi(\rvc | \rvz^{(1)})$, target label $\rvc$.
    \begin{tcolorbox}[colback=teal!5!white,colframe=teal!50!black!50!,left=2pt,right=2pt,top=0pt,bottom=1pt,
  colbacktitle=teal!25!white,title=\textbf{\color{black} Define latent energy-based model}]
    $E(\hat{\rvz}^{(1)}) = r_\psi(\rvc | \hat{\rvz}^{(1)}) \cdot p_\theta^{(1)}(\hat{\rvz}^{(1)})$
    \end{tcolorbox}
    \begin{tcolorbox}[colback=teal!5!white,colframe=teal!50!black!50!,left=2pt,right=2pt,top=0pt,bottom=1pt,
  colbacktitle=teal!25!white,title=\textbf{\color{black} Sample from $E(\hat{\rvz}^{(1)})$}]
    \While{True}
    \State Sample $\hat{\rvz}^{(1)} \sim p_\theta^{(1)}(\hat{\rvz}^{(1)})$;
    \State Sample $u \sim \mathrm{Uniform}(0, 1)$;
    \State \textbf{If} $u < r_\psi(\rvc | \hat{\rvz}^{(1)})$ \textbf{then} break.
    \EndWhile
    \end{tcolorbox}
    \State \textbf{Output} $\hat{\rvx} \sim \ptheta(\rvx | \hat{\rvz}^{(1)})$.
    \end{algorithmic}
\end{algorithm}
\end{minipage}
\end{center}

For generation from manipulation constraints, we need to further define a prior that favors closeness to the given latent variable so that the manipulated generation is close to the given image except for the label $\rvz$. If the latent variable for the original image is $\rvz^{(1)} \sim \qphi(\rvz^{(1)} | \rvx)$, then we define the closeness via the L2 distance between the it and the manipulated latent. We obtain the energy-based model by multiplying this with the diffusion ``prior'' and the classifier ``likelihood''. Then, we approximately draw samples from this energy by taking a gradient step from the original latent value $\rvz^{(1)}$ and then regularizing it with the diffusion prior; this is described in Algorithm~\ref{alg:few-shot-manip}. A step size $\eta$, diffusion noise magnitude $\alpha$, and the diffusion steps from $\alpha$ to $1$ are chosen as hyperparameters. We choose one $\eta$ for each attribute, $\alpha \approx 0.9$, and number of discretization steps to be $5$\footnote{The results are not particularly sensitive to how the discretization steps are chosen. For example, one can take $0.9 \to 0.92 \to 0.96 \to 0.98 \to 0.99 \to 1$.}; we tried $\alpha \in [0.65, 0.9]$ and found that our results are not very sensitive to values within this range. We list the $\eta$ values for each attribute (details in Appendix~\ref{app:exp}).

We note that a more principled approach is to take gradient with respect to the entire energy function (\textit{e.g.}, for Langevin dynamics), where the gradient over the DDIM can be computed with instantaneous change-of-variables formula~\cite{chen2018neural}; we observe that our current version is computationally efficient enough to perform well.

\begin{center}
\begin{minipage}{0.8\textwidth}
\begin{algorithm}[H]
\small
    \caption{Generate from manipulation constraints}
    \label{alg:few-shot-manip}
    \begin{algorithmic}
     \State \textbf{Input} model $r_\psi(\rvc | \rvz^{(1)})$, target label $\rvc$, original image $\rvx$.
    \State Acquire latent $\rvz^{(1)} \sim \qphi(\rvz^{(1)} | \rvx)$;
    \State Fit a model $r_\psi(\rvc | \rvz^{(1)})$ over $\{(\rvz^{(1)}_i, \rvc_i)\}_{i=1}^{n}$
    \begin{tcolorbox}[colback=teal!5!white,colframe=teal!50!black!50!,left=2pt,right=2pt,top=0pt,bottom=1pt,
  colbacktitle=teal!25!white,title=\textbf{\color{black} Define latent energy-based model}]
    $$E(\hat{\rvz}^{(1)}) = r_\psi(\rvc | \hat{\rvz}^{(1)}) \cdot p_\theta^{(1)}(\hat{\rvz}^{(1)}) \cdot \norm{\rvz^{(1)} - \hat{\rvz}^{(1)}}_2^2$$
    \end{tcolorbox}
    \begin{tcolorbox}[colback=teal!5!white,colframe=teal!50!black!50!,left=2pt,right=2pt,top=0pt,bottom=1pt,
  colbacktitle=teal!25!white,title=\textbf{\color{black} Sample from $E(\hat{\rvz}^{(1)})$ (approximate)}]
  \State Choose hyperparameters $\eta > 0, \alpha \in (0, 1)$.
    \State Take a gradient step $\bar{\rvz}^{(1)} \gets \rvz^{(1)} + \eta \nabla_{\rvz} r_\psi(\rvc | \rvz) \vert_{\rvz = \rvz^{(1)}}$.
    \State Add noise $\tilde{\rvz}^{(\alpha)} \gets \sqrt{\alpha} \bar{\rvz}^{(1)} + \sqrt{1 - \alpha} \epsilon$.
    \State Sample $\hat{\rvz}^{(1)} \sim p_\theta^{(\alpha, 1)}(\rvz^{(1)} | \tilde{\rvz}^{(\alpha)})$ with DDIM, \textit{i.e.}, use the diffusion prior to ``denoise''.
    \end{tcolorbox}
    \State \textbf{Output} $\hat{\rvx} \sim \ptheta(\rvx | \hat{\rvz}^{(1)})$.
    \end{algorithmic}
\end{algorithm}
\end{minipage} 
\end{center}

\section{Formal Statements and Proofs}

\subsection{Relationship to maximum likelihood}
\label{app:mle}

\mleinf*

\begin{theorem}(formal)
Suppose that $\rvx \in \R^d$. For any valid $\{\alpha_i\}_{i=0}^{T}$, let $\hat{w}$ satisfy:
\begin{gather}
   \forall t \in [2, \ldots, T], \quad \hat{w}(\alpha_t) = \frac{(1 - \alpha_t) \alpha_{t-1}}{2 (1 - \alpha_{t-1})^2 \alpha_t} \\
   \hat{w}(\alpha_1) = \frac{1 - \alpha_1}{2 (2 \pi)^d \alpha_1}
\end{gather}
then:
\begin{align}
    - L_{\mathrm{D2}}(\theta, \phi; \hat{w}) + H(\qphi(\rvz^{(1)} | \rvx)) \leq \bb{E}_{\pdata}[\log \ptheta(\rvx)]
\end{align}
where $\ptheta(\rvx) := \bb{E}_{\rvx_0 \sim p^{(0)}(\rvz^{(0)})}[\ptheta(\rvx | \rvz^{(0)})]$ is the marginal probability of $\rvx$ under the D2C model.
\end{theorem}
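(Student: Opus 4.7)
The plan is to derive the bound by chaining two variational inequalities: an outer ELBO on $\log p_\theta(\rvx)$ using $q_\phi(\rvz^{(1)} | \rvx)$ as the variational posterior, and an inner diffusion ELBO on $\log p_\theta^{(1)}(\rvz^{(1)})$ using the Gaussian forward process $q(\rvz^{(\alpha_{1:T})} | \rvz^{(1)})$ from Appendix~\ref{app:diffusion-training}. Together, these two layers produce exactly the reconstruction term, the weighted diffusion term, and the encoder entropy appearing in the theorem.

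First I would apply Jensen's inequality to $p_\theta(\rvx) = \int p_\theta(\rvx | \rvz^{(1)})\, p_\theta^{(1)}(\rvz^{(1)})\, d\rvz^{(1)}$, yielding
$$\log p_\theta(\rvx) \;\geq\; \E_{q_\phi(\rvz^{(1)} | \rvx)}\!\bigl[\log p_\theta(\rvx | \rvz^{(1)}) + \log p_\theta^{(1)}(\rvz^{(1)})\bigr] + H\bigl(q_\phi(\rvz^{(1)} | \rvx)\bigr).$$
The first summand matches the reconstruction part of $-L_{\mathrm{D2}}$ exactly, and the entropy term already matches the theorem statement, so the remaining task is the pointwise inequality $-\log p_\theta^{(1)}(\rvz^{(1)}) \leq \ell_{\mathrm{diff}}(\rvz^{(1)}; \hat{w}, \theta)$.

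Second I would establish this pointwise bound by replaying the diffusion ELBO derivation of Appendix~\ref{app:diffusion-training} with $\rvz^{(1)}$ in the role of the clean observation. The bound decomposes as $L_T + \sum_{t=2}^{T} L_{t-1} + L_0$; plugging the posterior variance from \eqref{eq:q-reverse} and the $\epsilon$-reparametrization into each middle term yields the weighted MSE of \eqref{eq:deriv-end}, and a direct simplification using $(\sigma^{(\alpha_t)})^2 = (1-\alpha_{t-1})(\alpha_{t-1} - \alpha_t)/[\alpha_{t-1}(1-\alpha_t)]$ reproduces $\hat{w}(\alpha_t) = (1-\alpha_t)\alpha_{t-1}/[2(1-\alpha_{t-1})^2 \alpha_t]$ for $t \geq 2$. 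The boundary term $L_T$ vanishes if we set $\alpha_T = 0$ (otherwise a short limiting argument shows it is negligible). For $L_0 = -\E_q[\log p_\theta(\rvz^{(1)} | \rvz^{(\alpha_1)})]$, parametrizing $p_\theta(\rvz^{(1)} | \rvz^{(\alpha_1)})$ as a unit-variance Gaussian centered on the $\epsilon$-prediction reconstruction yields an MSE term with coefficient $(1-\alpha_1)/(2\alpha_1)$, while absorbing the Gaussian log-normalizer $\tfrac{d}{2}\log(2\pi)$ into the weight as a multiplicative $(2\pi)^d$ factor recovers exactly $\hat{w}(\alpha_1) = (1-\alpha_1)/[2(2\pi)^d \alpha_1]$.

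Finally I would chain the two inequalities and take the outer expectation over $\pdata$, which produces the stated bound. The main obstacle I anticipate is the bookkeeping around the $L_0$ boundary term: the Gaussian normalization constant must be tracked precisely and folded into the weight rather than left as an additive offset, which is what forces the unusual $(2\pi)^d$ factor in $\hat{w}(\alpha_1)$. A secondary care point is verifying the orientation of every KL divergence so that the chained inequality remains a lower bound on $\E_{\pdata}[\log p_\theta(\rvx)]$; if one wanted a non-asymptotic version with $\alpha_T > 0$, one would also need a short argument controlling the $L_T$ contribution in terms of the size of $\alpha_T$.
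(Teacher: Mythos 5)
Your proposal takes essentially the same route as the paper: an outer VAE-style ELBO on $\log p_\theta(\rvx)$ with $q_\phi(\rvz^{(1)}|\rvx)$ as the variational posterior, chained with an inner diffusion ELBO on $\log p_\theta(\rvz^{(1)})$ built from the Gaussian forward process, with the weights read off the per-step KL terms $L_{t-1}$. The reduction to the inequality $\E_{q_\phi}\bigl[\ell_{\mathrm{diff}}(\rvz^{(1)};\hat w,\theta)\bigr] \geq -\E_{q_\phi}\bigl[\log p_\theta(\rvz^{(1)})\bigr]$, the identification of the entropy term, the vanishing of the $L_T$ boundary, and the derivation of $\hat w(\alpha_t)$ for $t\geq 2$ all match the paper's argument.

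The one real gap is your treatment of $L_0 = -\E_q\bigl[\log p_\theta(\rvz^{(\alpha_0)} \mid \rvz^{(\alpha_1)})\bigr]$. You parametrize a \emph{unit}-variance Gaussian and then propose to ``absorb the Gaussian log-normalizer $\tfrac{d}{2}\log(2\pi)$ into the weight as a multiplicative $(2\pi)^d$ factor.'' That is not an algebraically valid move: $-\log p_\theta = \tfrac12\norm{\cdot}_2^2 + \tfrac{d}{2}\log(2\pi)$ has an \emph{additive} constant, and no multiplicative rescaling of the MSE term reproduces an additive offset uniformly in the residual. The paper sidesteps this by choosing the output standard deviation to be $1/\sqrt{2\pi}$, which makes the Gaussian normalizing constant exactly $1$ so the log-normalizer vanishes identically, and then the entire contribution of $L_0$ is a pure quadratic in $\epsilon - \epsilon_\theta$ with coefficient obtained from $\rvz^{(\alpha_0)} - \mu_\theta = \sqrt{(1-\alpha_1)/\alpha_1}\,(\epsilon_\theta - \epsilon)$. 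You should replace the ``absorb the normalizer'' step with this variance choice; as written, your justification for $\hat w(\alpha_1)$ would not survive scrutiny even though the headline weight you quote agrees with the paper's.
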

\begin{proof}
First, we have that:
\begin{align}
    \bb{E}_{\pdata(\rvx)}[\log \ptheta(\rvx)] & = \bb{E}_{\pdata(\rvx)}\left[\log \sum_{\rvz^{(1)}} \ptheta(\rvx | \rvz^{(1)}) \ptheta(\rvz^{(1)})\right] \\
    & \geq \bb{E}_{\pdata(\rvx), \qphi(\rvz^{(1)})}[\log \ptheta(\rvx | \rvz^{(1)}) + \log \ptheta(\rvz^{(1)}) - \log \qphi(\rvz^{(1)} | \rvx)] \\
     & = \bb{E}_{\pdata(\rvx), \qphi(\rvz^{(1)} | \rvx)}[\log \ptheta(\rvx | \rvz^{(1)}) - \KL(\qphi(\rvz^{(1)} | \rvx) \Vert \ptheta(\rvz^{(1)})) ].
\end{align}
where we use Jensen's inequality here. Compared with the objective for D2:
\begin{align}
    - L_\mathrm{D2}(\theta, \phi; w) & := \E_{\rvx \sim \pdata, \rvz^{(1)} \sim \qphi(\rvz^{(1)} | \rvx)}[\log p(\rvx | \rvz^{(1)}) - \ell_{\text{diff}}(\rvz^{(1)}; w, \theta)],
\end{align}
and it is clear the proof is complete if we show that:
\begin{align}
    & H(\qphi(\rvz^{(1)} | \rvx)) - \bb{E}_{\rvz^{(1)} \sim \qphi(\rvz^{(1)} | \rvx)}[\ell_{\text{diff}}(\rvz^{(1)}; \hat{w}, \theta)]   \\ 
    \leq  & -\KL(\qphi(\rvz^{(1)} | \rvx) \Vert \ptheta(\rvz^{(1)})) \\ 
    = &  H(\qphi(\rvz^{(1)} | \rvx)) + \bb{E}_{\rvz^{(1)} \sim \qphi(\rvz^{(1)} | \rvx)}[\log p_\theta(\rvz^{(1)})]
\end{align}
or equivalently:
\begin{align}
    \bb{E}_{\rvz^{(1)} \sim \qphi(\rvz^{(1)} | \rvx)}[\ell_{\text{diff}}(\rvz^{(1)}; \hat{w}, \theta)] \leq \bb{E}_{\rvz^{(1)} \sim \qphi(\rvz^{(1)} | \rvx)}[\log p_\theta(\rvz^{(1)})]
\end{align}
Let us apply variational inference with an inference model $q(\rvz^{(\alpha_{1:T})}|\rvz^{(1)})$ where $\alpha_0 = 1$ and $\alpha_T = 0$:
\begin{align}
    & \bb{E}_{\rvz^{(1)} \sim \qphi(\rvz^{(1)} | \rvx)}[\log p_\theta(\rvz^{(1)})] =  \bb{E}_{\rvz^{(1)} \sim \qphi(\rvz^{(1)} | \rvx)}[\log \sum_{\rvz} \big( p_\theta(\rvz^{(\alpha_{T})}) \prod_{t=1}^{T} p_\theta(\rvz^{(\alpha_{t-1})} | \rvz^{(\alpha_t)}) \big)] \nonumber \\
    \geq \ & \bb{E}_{\rvz^{(\alpha_{0:T})}}[  \log p_\theta(\rvz^{(\alpha_{T})}) + \sum_{t=1}^{T} \log p_\theta(\rvz^{(\alpha_{t-1})} | \rvz^{(\alpha_t)}) - \log q(\rvz^{(\alpha_{1:T})}|\rvz^{(\alpha_0)}) ] \\
    \geq \ & \bb{E}_{\rvz^{(\alpha_{0:T})}}\Big[  \log p_\theta(\rvz^{(\alpha_{T})}) -  \log q(\rvz^{(\alpha_{T})} | \rvz^{(\alpha_{0})})  \\ 
    & \qquad \qquad \qquad - \sum_{t=2}^{T} \underbrace{\KL(q(\rvz^{(\alpha_{t-1})} | \rvz^{(\alpha_t)}, \rvz^{(\alpha_0)}) \Vert  p_\theta(\rvz^{(\alpha_{t-1})} | \rvz^{(\alpha_t)}))}_{L_{t-1}} + \log p_\theta(\rvz^{(\alpha_{0})} | \rvz^{(\alpha_1)})  \Big] \nonumber
\end{align}
where we remove the superscript of $p_\theta$ to indicate that there are no intermediate discretization steps between $\alpha_{t-1}$ and $\alpha_t$. Now, for $t \geq 2$, let us consider $\ptheta$ and $\qphi$ with the form in Equations~\ref{eq:q-reverse} and \ref{eq:p-diffusion} respectively, which are both Gaussian distributions (restrictions to $\ptheta$ will still give lower bounds). Then we can model the standard deviation of $p_\theta(\rvx^{(\alpha_{t-1})} | \rvx^{(\alpha_{t})})$ to be equal to that of $q(\rvx^{(\alpha_{t-1})} | \rvx^{(\alpha_{t})}, \rvx^{(\alpha_{0})}))$. Under this formulation, the KL divergence for $L_{t-1}$ is just one between two Gaussians with the same standard deviations and is a weighted Euclidean distance between the means. Using the derivation from \Eqref{eq:deriv-start} to \Eqref{eq:deriv-end}, we have that:
\begin{align}
    L_{t-1} = \bb{E}_{\rvz_0, \epsilon}\left[\frac{(1 - \alpha_t) \alpha_{t-1}}{2 (1 - \alpha_{t-1})^2 \alpha_t} \norm{\epsilon - { \epsilon_\theta(\rvz^{(\alpha_{t})}; \alpha_{t}, \alpha_{t-1})}}_2^2\right]
\end{align}
which gives us the weights for $\hat{w}$ for $\alpha_{2:T}$. For $p_\theta(\rvz^{(\alpha_{0})} | \rvz^{(\alpha_1)})$ let us model it to be a Gaussian with mean
$$
\mu_\theta(\rvz^{(\alpha_1)}; \alpha_1, \alpha_0) = \frac{\rvz^{(\alpha_1)} - \sqrt{1 - \alpha_t} \epsilon_\theta(\rvz^{(\alpha_{1})}; \alpha_{1}, \alpha_{0})}{\sqrt{\alpha_1}}
$$
and standard deviation $1/\sqrt{2\pi}$ (chosen such that normalization constant is 1). Thus, with $$\rvz^{(0)} = \frac{\rvz^{(\alpha_1)} - \sqrt{1 - \alpha_t} \epsilon}{\sqrt{\alpha_1}}$$ we have that:
\begin{align}
    \log p_\theta(\rvz^{(\alpha_{0})} | \rvz^{(\alpha_1)}) = \frac{1 - \alpha_1}{2 (2 \pi)^d \alpha_1} \norm{\epsilon - { \epsilon_\theta(\rvz^{(\alpha_{1})}; \alpha_{1}, \alpha_{0})}}_2^2
\end{align}
which gives us the weight of $\hat{w}$ for $\alpha_1$. Furthermore:
\begin{align}
    \bb{E}_{\rvz^{(\alpha_{0:T})}}[  \log p_\theta(\rvz^{(\alpha_{T})}) - q(\rvz^{(\alpha_{T})} | \rvz^{(\alpha_{0})})] = 0
\end{align}
because $\rvz^{(\alpha_{T})} \sim \gN(0, \mI)$ for both $\ptheta$ and $q$. Therefore, we have that:
\begin{align}
    \bb{E}_{\rvz^{(1)} \sim \qphi(\rvz^{(1)} | \rvx)}[\ell_{\text{diff}}(\rvz^{(1)}; \hat{w}, \theta)] \leq \bb{E}_{\rvz^{(1)} \sim \qphi(\rvz^{(1)} | \rvx)}[\log p_\theta(\rvz^{(1)})]
\end{align}
which completes the proof.
\end{proof}

\subsection{D2 models address latent posterior mismatch in VAEs}
\label{app:prior-hole}

\priorholeinf*

\begin{theorem}(formal)
Let $\ptheta(\rvz) = \gN(0, \mI)$ where $\rvz \in \R^d$. For any $\epsilon > 0, \delta < 0.5$, there exists a distribution $\qphi(\rvz)$ with an $(\epsilon, \delta)$-prior hole, such that $\KL(\qphi \Vert \ptheta) \leq \log 2$ and $W_2(\qphi, \ptheta) < \gamma$ for any $\gamma > 0$, where $W_2$ is the 2-Wasserstein distance. 
\end{theorem}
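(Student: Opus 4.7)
The plan is to construct a single family of $\qphi$ indexed by a resolution parameter $N$ which simultaneously delivers the prior hole, the KL bound, and an arbitrarily small $W_2$ distance. Let $\Phi$ be the standard Gaussian CDF and choose quantiles $-\infty = a_0 < a_1 < \cdots < a_{2N-1} < a_{2N} = +\infty$ with $\Phi(a_i) - \Phi(a_{i-1}) = 1/(2N)$. Form the slabs $R_i = \{\rvz \in \R^d : a_{i-1} < z_1 \leq a_i\}$, let $A = \bigcup_{i\ \text{odd}} R_i$ and $B = \bigcup_{i\ \text{even}} R_i$, and define
\begin{equation*}
\qphi(\rvz) \;:=\; 2\,\ptheta(\rvz)\,\mathbf{1}_A(\rvz).
\end{equation*}
This is a valid density since $\ptheta(A) = 1/2$, and clearly $\supp(\qphi) \subseteq \supp(\ptheta) = \R^d$.

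The prior-hole and KL calculations are then immediate. Taking $S = B$ gives $\int_S \ptheta\,\diff\rvz = 1/2 > \delta$ and $\int_S \qphi\,\diff\rvz = 0 \leq \epsilon$, so $\qphi$ has the required $(\epsilon, \delta)$-prior hole. Since $\qphi/\ptheta$ equals $2$ on $A$ and $0$ on $B$,
\begin{equation*}
\KL(\qphi \Vert \ptheta) \;=\; \int_A 2\,\ptheta(\rvz)\,\log 2 \,\diff\rvz \;=\; \log 2.
\end{equation*}

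The substantive work is showing $W_2(\qphi, \ptheta) < \gamma$ by choosing $N$ large. I would use the coupling that leaves the $A$-part of $\ptheta$ in place (matching half of $\qphi$'s mass by the identity map, contributing zero cost) and transports the $B$-part of $\ptheta$ onto the remaining half of $\qphi|_A$ via one-dimensional monotone rearrangement along $z_1$, slab by slab, so that each $R_{2j}$ is mapped to the adjacent odd slab $R_{2j \pm 1}$; the remaining $d-1$ coordinates are untouched, which reduces the analysis to the one-dimensional marginal. For slabs contained in a ``bulk'' window $\{|z_1| \leq M\}$, the Gaussian density is at least $\phi(M)$, so each slab has $z_1$-width at most $1/(2N\phi(M))$, each slab-pair contributes at most $O(1/(N^3 \phi(M)^2))$, and all bulk pairs together contribute $O(1/(N^2 \phi(M)^2))$. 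For the two extreme slabs (whose $z_1$-support is unbounded) I would use $(T(z_1) - z_1)^2 \leq 2T(z_1)^2 + 2 z_1^2$ together with the tail identities $\int_c^\infty y^2 \phi(y)\,\diff y = c\phi(c) + (1-\Phi(c))$ and $\int_0^{1/(2N)} \Phi^{-1}(u)^2\,\diff u = O(\log(N)/N)$, using $a_{2N-1} \sim \sqrt{2\log(2N)}$, to obtain a tail cost of $O(\log(N)/N)$. Choosing $M$ growing slowly with $N$ (e.g.\ $M = \sqrt{\log N}$) and then $N$ large enough drives both contributions below $\gamma^2$.

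The main obstacle is precisely this tail bound: the outermost slabs have infinite Lebesgue measure, so the naive ``slab width squared times mass'' bound used in the bulk simply fails there. The remedy is to exploit that the standard Gaussian places vanishing mass and vanishing second moment on $\{|z_1| > a_{2N-1}\}$ at a quantifiable rate; this is elementary but must be handled carefully. Everything else—valid density, the prior-hole verification, the KL computation, and the bulk transport bound—is routine, and the multidimensional case is inherited for free from the one-dimensional marginal analysis.
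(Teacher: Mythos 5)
Your construction is correct and the overall strategy is the same as the paper's---partition the space into cells of equal $\ptheta$-mass, zero out $\qphi$ on alternate cells while doubling it on the rest, take the zeroed cells as the prior hole, observe that the likelihood ratio is bounded by~$2$ so the KL is at most $\log 2$, and bound $W_2$ by transporting each zeroed cell's mass to an adjacent cell whose width shrinks as the partition is refined. The geometric implementation, however, differs in a way that matters for the $W_2$ step. The paper works radially: it fixes a ball $B(0, f^{-1}(2\delta))$ of $\ptheta$-mass $2\delta$, divides it into $2n$ concentric annuli of equal mass, zeroes the odd annuli, and leaves $\qphi = \ptheta$ unchanged outside the ball. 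Because all modified mass lives in a compact region, every annulus has finite width and the transport argument is uniform---there is no tail case to handle, and $W_2 = O(1/n)$ follows from a single density lower bound on the ball. You instead slab along $z_1$ by Gaussian quantiles over the whole line, which gives a cleaner construction (it is $\delta$-independent, yields KL exactly $\log 2$, and produces a prior hole of mass exactly $1/2$ serving all $\delta < 1/2$ simultaneously) but forces you to treat the two unbounded extreme slabs separately via a second-moment tail estimate. You identified this obstacle correctly and your sketch of the tail bound is sound (the $\int_0^{1/2N}\Phi^{-1}(u)^2\,\diff u = O(\log N/N)$ estimate is right), so the proof goes through---but the paper's truncation device is the move that makes the tail issue vanish by fiat, and it is worth noticing as a simplification you could have borrowed.
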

\begin{proof}
    Let us define a function $f: \R_{\geq 0} \to [0, 1]$ such that for any Euclidean ball $B(0, R)$ centered at 0 with radius $R$:
    \begin{align}
        f(R) := \int_{B(0, R)} \ptheta(\rvz) \diff \rvz,
    \end{align}
    \textit{i.e.}, $f(R)$ measures the probability mass of the Gaussian distribution $\ptheta(\rvz)$ within $B(0, R)$. As $\mathrm{d}f / \mathrm{d}R > 0$ for $R > 0$, $f$ is invertible.
    
    \begin{figure}[h]
        \centering
        \includegraphics[width=0.5\textwidth]{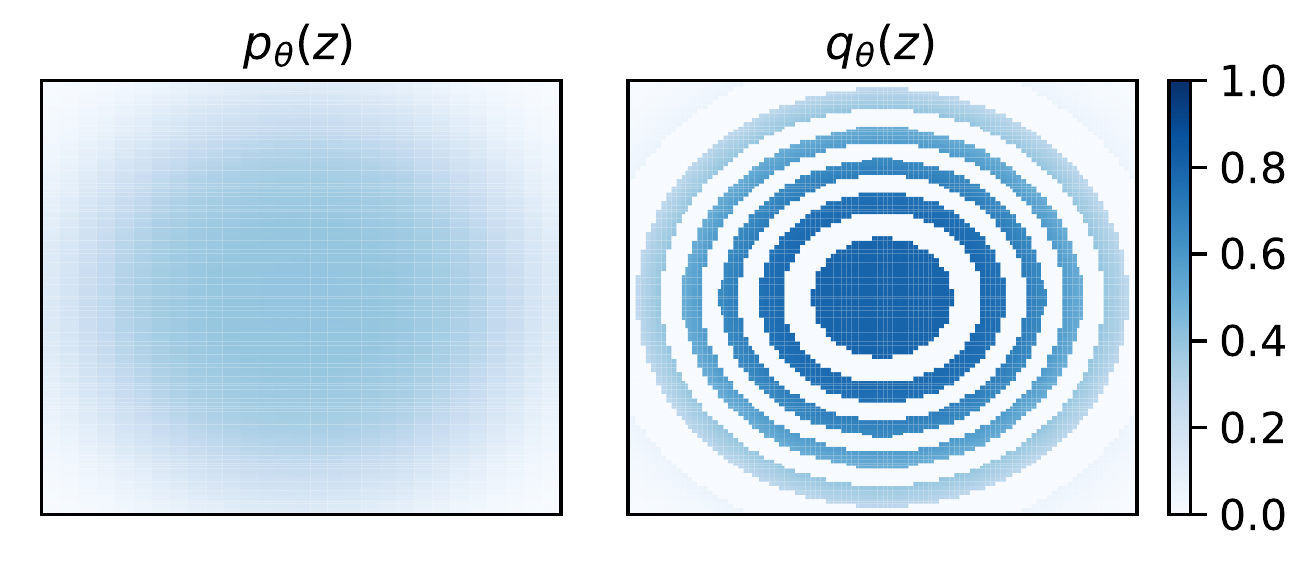}
        \caption{Illustration of the construction in 2d. When we use more rings, the prior hole and upper bound of KL divergence are constant but the upper bound of Wasserstein distance decreases.}
    \end{figure}
    
    Now we shall construct $\qphi(\rvz)$. First, let $\qphi(\rvz) = \ptheta(\rvz)$ whenever $\norm{\rvz}_2 \geq f^{-1}(2\delta)$; then for any $n$, we can find a sequence $\{r_0, r_1, \ldots, r_{2n}\}$ such that:
    \begin{align}
        r_0 = 0, \quad r_{2n} = f^{-1}(2\delta), \quad f(r_i) - f(r_{i-1}) = f^{-1}(2\delta) \delta / n \ \text{for all } k \in \{1, \ldots, 2n\},
    \end{align}
    Intuitively, we find $2n$ circles with radii $\{r_0, \ldots, r_{2n}\}$ whose masses measured by $\ptheta(\rvz)$ is an arithmetic progression $\{0, \delta / 2n, \ldots, 2 \delta \}$.
    We then define $\qphi(\rvz)$ for $\norm{\rvz} < f^{-1}(2\delta)$ as follows:
    \begin{align}
        \qphi(\rvz) = \begin{cases}
        2 \cdot \ptheta(\rvz) & \text{if } \norm{\rvz} \in \bigcup_{k=0}^{n-1} [r_{2k}, r_{2k+1}) \\
        0 & \text{otherwise}
        \end{cases}
    \end{align}
    Intuitively, $\qphi$ is defined by moving all the mass from ring $(2k+1)$ to ring $2k$. Note that this $\qphi(\rvz)$ is a valid probability distribution because:
    \begin{align}
        \int_{\R^d} \qphi(\rvz) \diff \rvz & = \int_{B(0, f^{-1}(2\delta))} \qphi(\rvz) \diff \rvz + \int_{B^{c}(0, f^{-1}(2\delta))} \qphi(\rvz) \diff \rvz \\
        & = 2 \int_{B(0, f^{-1}(2\delta))} \ptheta(\rvz) \mathbb{I}\left(\norm{\rvz} \in \bigcup_{i=0}^{n-1} [r_{2k}, r_{2k+1})\right) \diff \rvz + \int_{B^{c}(0, f^{-1}(2\delta))} \ptheta(\rvz) \diff \rvz \\
        & = \int_{B(0, f^{-1}(2\delta))} \ptheta(\rvz) \diff \rvz + \int_{B^{c}(0, f^{-1}(2\delta))} \ptheta(\rvz) \diff \rvz = 1
    \end{align}
    Next, we validate that $\qphi$ satisfies our constraints in the statement.
    
    \paragraph{Prior hole} Apparently, if we choose $\gS = \bigcup_{k=0}^{n-1} [r_{2k+1}, r_{2k+2})$, then $\int_\gS \ptheta(\rvz) \diff \rvz = \delta$ and $\int_\gS \qphi(\rvz) \diff \rvz = 0$; so $\gS$ instantiates a $(\epsilon, \delta)$-prior hole.
    
    \paragraph{KL divergence} We note that $\qphi(\rvz) \leq 2 \ptheta(\rvz)$ is true for all $\rvz$, so $$\KL(\qphi(\rvz) \Vert \ptheta(\rvz)) = \bb{E}_{\rvz \sim \qphi(\rvz)}[\log \qphi(\rvz) - \log \ptheta(\rvz)] \leq \log 2.$$
    
    \paragraph{2 Wasserstein Distance} We use the Monge formulation:
    \begin{align}
        W_2(\qphi(\rvz), 2 \ptheta(\rvz)) = \min_{T: \qphi = T_\sharp \ptheta} \int_{R^d} \norm{\rvz - T(\rvz)}_2^2 \ptheta(\rvz) \diff \rvz
    \end{align}
    where $T$ is any transport map from $\ptheta$ to $\qphi$. Consider the transport map $\hat{T}$ such that:
    \begin{align}
        \hat{T}(\rvz) = \begin{cases}
        \rvz & \text{if } \quad \qphi(\rvz) \geq 0 \\
        \rvz \cdot f^{-1}(f(\norm{\rvz}) - f(r_{2k+1}) + k \delta / n)& \text{otherwise, for } k \text{ such that }\norm{\rvz}_{2} \leq [r_{2k+1}, r_{2k+2})
        \end{cases}
    \end{align}
    which moves the mass in $[r_{2k+1}, r_{2k+2})$ to $[r_{2k}, r_{2k+1})$. From this definition, we have that $\norm{\hat{T}(\rvz) - \rvz}_2 \leq \max_{k \in \{0, \ldots, n-1\}} (r_{2k+2} - r_{2k})$. Moreover, since by definition,
    \begin{align}
        2\delta/n & = \int_{B(0, r_{2k+2})} p_\theta(\rvz) \diff \rvz - \int_{B(0, r_{2k})} p_\theta(\rvz) \diff \rvz \\
        & > \pi (r_{2k+2}^2 - r_{2k}^2) \min_{\rvz: \norm{\rvz} \in [r_{2k}, r_{2k+2})} p_\theta(\rvz) \\
        & > \pi (r_{2k+2} - r_{2k})^2 \min_{\rvz: \norm{\rvz} \in [r_{2k}, r_{2k+2})} p_\theta(\rvz)
    \end{align}
    We have that
    \begin{align}
        W_2(\qphi(\rvz), 2 \ptheta(\rvz)) & \leq \max_{k \in \{0, \ldots, n-1\}} (r_{2k+2} - r_{2k})^2  < \frac{2\delta}{\pi n \min_{\rvz: \norm{\rvz}_2 \leq r_{2n}} \ptheta(\rvz)} \\
        & <  \frac{2\delta}{\pi n \min_{\rvz: \norm{\rvz}_2 \leq r_{2n}} \ptheta(f^{-1}(2\delta) \rvn)}
    \end{align}
    for any vector $\rvn$ with norm 1. Note that the above inequality is inversely proportional to $n$, which can be any integer. Therefore, for a fixed $\delta$, $W_2(\qphi(\rvz), 2 \ptheta(\rvz)) = O(1/n)$; so for any $\gamma$, there exists $n$ such that $W_2(\qphi(\rvz), 2 \ptheta(\rvz)) < \gamma$, completing the proof.
\end{proof}

\paragraph{Note on DDIM prior preventing the prior hole} For a noise level $\alpha$, we have that:
\begin{align}
    q^{(\alpha)}(\rvz^{(\alpha)}) = \bb{E}_{\rvz^{(1)} \sim q^{(1)}(\rvz^{(1)}}[\gN(\sqrt{\alpha} \rvz^{(1)}, (1 - \alpha)\mI)]
\end{align}
as $\alpha \to 0$, $\KL(q^{(\alpha)}(\rvz^{(\alpha)}) \Vert \gN(0, \mI)) \to 0$. From Pinsker's inequality and the definition of $(\epsilon, \delta)$-prior hole:
\begin{align}
    \delta - \epsilon \leq D_{\mathrm{TV}}(q^{(\alpha)}(\rvz^{(\alpha)}), \gN(0, \mI))) \leq \sqrt{\frac12 \KL(q^{(\alpha)}(\rvz^{(\alpha)}) \Vert \gN(0, \mI))},
\end{align}
we should not expect to see any $(\epsilon, \delta)$-prior hole where the difference between $\delta$ and $\epsilon$ is large.

\section{Experimental details}
\label{app:exp}

\subsection{Architecture details and hyperparameters used for training}
We modify the NVAE~\cite{vahdat2020nvae} architecture %, such that it behaves as a vanilla auto-encoder instead of being a hierarchical model. This is done b
by removing the ``Combiner Cells'' in both encoder and decoder. For the diffusion model, we use the same architecture with different number of channel multiplications, as used in~\cite{ho2020denoising,song2020denoising}. 
For Contrastive learning, we use the MoCo-v2~\cite{chen2020improved} algorithm with augmentations such as \textit{RandomResizedCrop, ColorJitter, RandomGrayscale, RandomHorizontalFlip}.

Additional details about the hyperparameters used are provided in Table~\ref{tab:hparams}.

% Please add the following required packages to your document preamble:
% \usepackage{booktabs}

\begin{table}[]
\centering
\caption{Hyperparameters across different datasets
}
\label{tab:hparams}

\scalebox{0.75}{
\begin{tabular}{@{}lllllll@{}}
\toprule
\textbf{Hyperparameter}                 & \multicolumn{1}{c}{\begin{tabular}[c]{@{}c@{}}\textbf{CIFAR-10}\\ 32x32\end{tabular}} & \multicolumn{1}{c}{\begin{tabular}[c]{@{}c@{}}\textbf{CIFAR-100}\\ 32x32\end{tabular}} & \multicolumn{1}{c}{\begin{tabular}[c]{@{}c@{}}\textbf{CelebA-64}\\ 64x64\end{tabular}} & \multicolumn{1}{c}{\begin{tabular}[c]{@{}c@{}}\textbf{fMoW}\\ 64x64\end{tabular}} & \multicolumn{1}{c}{\begin{tabular}[c]{@{}c@{}}\textbf{CelebA-HQ-256}\\ 256x256\end{tabular}} & \multicolumn{1}{c}{\begin{tabular}[c]{@{}c@{}}\textbf{FFHQ-256}\\ 256x256\end{tabular}} \\ \midrule
\# of epochs                            & 1000                                                                         & 1000                                                                         & 300                                                                           & 300                                                                      & 200                                                                                 & 100                                                                            \\ \midrule
batch size per GPU                      & 32                                                                           & 32                                                                           & 16                                                                            & 16                                                                       & 3                                                                                   & 3                                                                              \\ \midrule
\# initial channels in enc,             & 128                                                                          & 128                                                                          & 64                                                                            & 64                                                                       & 24                                                                                  & 24                                                                             \\ \midrule
spatial dims of z                       & 16*16                                                                        & 16*16                                                                        & 32*32                                                                         & 32*32                                                                    & 64*64                                                                               & 64*64                                                                          \\ \midrule
\# channel in z                         & 8                                                                            & 8                                                                            & 5                                                                             & 5                                                                        & 8                                                                                   & 8                                                                              \\ \midrule
MoCo-v2 queue size                      & 65536                                                                        & 65536                                                                        & 65536                                                                         & 65536                                                                    & 15000                                                                               & 15000                                                                          \\ \midrule
Diffusion feature map res. & 16,8,4,2                                                                     & 16,8,4,2                                                                     & 32,16,8,4,1                                                                   & 32,16,8,4,1                                                              & 64,32,16,8,2                                                                        & 64,32,16,8,2                                                                   \\ \midrule
$\lambda^{-1}$                                  & 17500                                                                        & 17500                                                                        & 17500                                                                         & 17500                                                                    & 17500                                                                               & 17500                                                                          \\ \midrule
learning rate                           & 0.001                                                                        & 0.001                                                                        & 0.001                                                                         & 0.001                                                                    & 0.001                                                                               & 0.001                                                                          \\ \midrule
Optimizer                               & AdamW                                                                        & AdamW                                                                        & AdamW                                                                         & AdamW                                                                    & AdamW                                                                               & AdamW                                                                          \\ \midrule
\# GPUs                                 & 8                                                                            & 8                                                                            & 4                                                                             & 4                                                                        & 8                                                                                   & 8                                                                              \\ \midrule
GPU Type                                & 16 GB V100                                                                & 16 GB V100                                                                 & 12 GB Titan X                                                                 & 12 GB Titan X                                                            & 16 GB V100                                                                          & 16 GB V100                                                                     \\ \midrule
Total training time (h)                 & 24                                                                           & 24                                                                           & 120                                                                           & 120                                                                      & 96                                                                                  & 96                                                                             \\ \bottomrule
\end{tabular}}
\end{table}

\subsection{Additional details for conditional generation}
For $r_\psi(\rvc | \rvz^{(1)})$ we consider training a linear model over the latent space, which has the advantage of being computationally efficient. For conditional generation on labels, we reject samples if their classifier return are lower than a certain threshold (we used $0.5$ for all our experiments). For conditional image manipulation, we consider the same step size $\eta$ for each attribute: $\eta = 10$ for \textit{red lipstick} and $\eta = 15$ for \textit{blond}. We note that these values are not necessarily the optimal ones, as the intensity of the change can grow with a choice of larger $\eta$ values.

\subsection{Amazon Mechanical Turk procedure}

The mechanical turk evaluation is done for different attributes to find out how evaluators evaluate the different approaches. The evaluators are asked to compare a pair of images, and find the best image, which retains the identity as well as contains the desired attribute. Figure~\ref{fig:amt} a) shows the instructions that was given to the evaluators before starting the test and Figure~\ref{fig:amt} b) contains the UI shown to the evaluators when doing comparison. Each evaluation task contains 10 pairwise comparisons, and we perform 15 such evaluation tasks for each attribute. The reward per task is kept as 0.25\$. Since each task takes around 2.5 mins, so the hourly wage comes to be 6\$ per hour.

\begin{figure}
    \centering
    \includegraphics[width=0.9\textwidth]{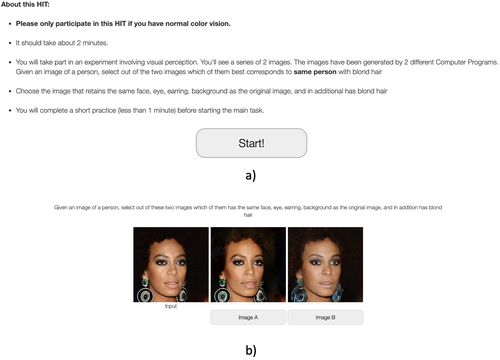}
    \caption{a) Instructions shown to human evaluators for Amazon Mechanical Turk for blond hair before starting the evaluation and b) UI shown to the evaluators when doing comparison.}
    \label{fig:amt}
\end{figure}

% \begin{figure}
%     \centering
%     \includegraphics[width=0.75\textwidth]{images/AMT_home_page.png}
%     \caption{Instructions shown to human evaluators for Amazon Mechanical Turk for blond hair}
%     \label{fig:amt_instruction}
% \end{figure}

% \begin{figure}
%     \centering
%     \includegraphics[width=0.5\textwidth]{images/AMT_pair.png}
%     \caption{UI shown to the evaluators}
%     \label{fig:amt_pair}
% \end{figure}

\section{Additional Results}
\label{app:results}

We list results for unconditional CIFAR-10 image generation for various types of generative models in Table~\ref{tab:cifar10}. While our results are slightly worse than state-of-the-art diffusion models, we note that our D2C models are trained with relatively fewer resources that some of the baselines; for example, our D2C models is trained on 8 GPUs for 24 hours, whereas NVAE is trained on 8 GPUs for 100 hours and DDPM is trained on v3-8 TPUs for 24 hours. We also note that these comparisons are not necessarily fair in terms of the architecture and compute used to produce the samples.

We list additional image generation results in Figure~\ref{fig:ffhq-long} (unconditional), Figures~\ref{fig:blond-large},~\ref{fig:red-lipstick-large},~\ref{fig:beard-large}, and~\ref{fig:gender-large} (conditional on manipulation constraints), and Figures~\ref{fig:blond_our}, ~\ref{fig:blond_ddim}, ~\ref{fig:gender_our}, and~\ref{fig:gender_ddim} (conditional on labels)\footnote{We will list more results online after publication.}.

% \begin{center}
% \begin{minipage}{0.5\textwidth}
\begin{wraptable}{r}{0.45\textwidth}
\vspace{-4em}
    \centering
    \caption{CIFAR-10 image generation results. }
    \vspace{0.5em}
    \label{tab:cifar10}
    \begin{tabular}{l|r}
    \toprule
        Method &  FID\\\midrule
       NVAE~\cite{vahdat2020nvae}  & 51.71 \\
       NCP-VAE~\cite{aneja2020ncp} & 24.08 \\
    %   RAE~\cite{ghosh2019variational} & 74.16 \\
    %   2s-VAE~\cite{dai2019diagnosing} & 72.9 \\
       EBM~\cite{du2019implicit} & 40.58 \\
       StyleGAN2~\cite{karras2020analyzing} & 3.26 \\
       DDPM~\cite{ho2020denoising} & 3.17 \\
       DDIM~\cite{song2020denoising} & 4.04 \\
       D2C & 10.15 \\\bottomrule
    \end{tabular}
\end{wraptable}
% \end{minipage}
% \end{center}

\begin{figure}
    \centering
    \includegraphics[width=\textwidth]{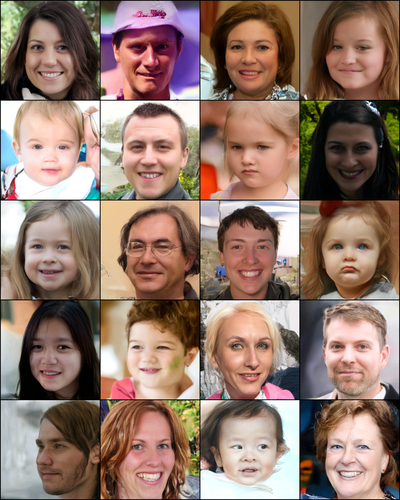}
    \caption{Additional image samples for the FFHQ-256 dataset.}
    \label{fig:ffhq-long}
\end{figure}

\begin{figure}
    \centering
    \includegraphics[width=\textwidth]{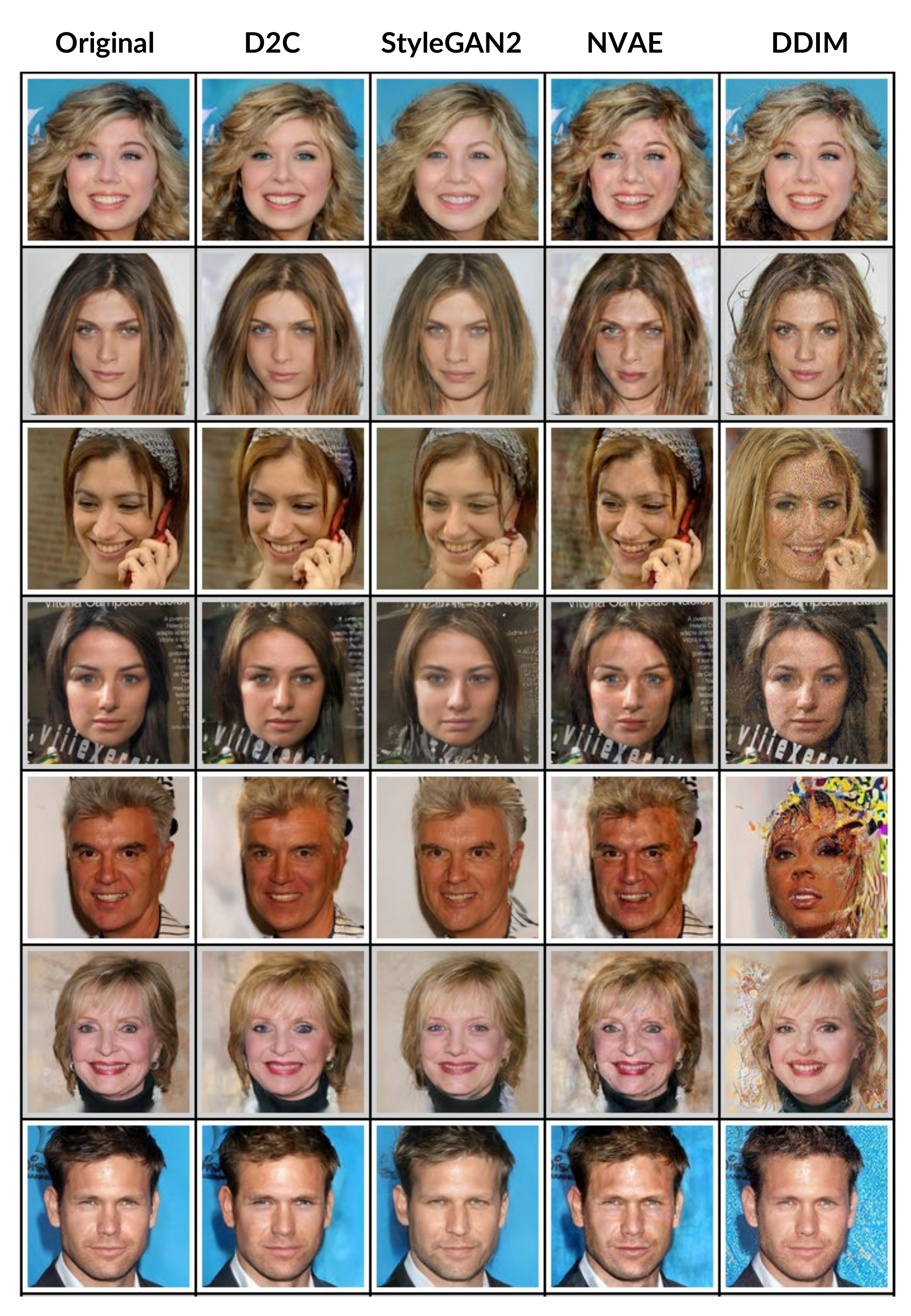}
    \caption{Image manipulation results for \textit{blond hair}.}
    \label{fig:blond-large}
\end{figure}

\begin{figure}
    \centering
    \includegraphics[width=\textwidth]{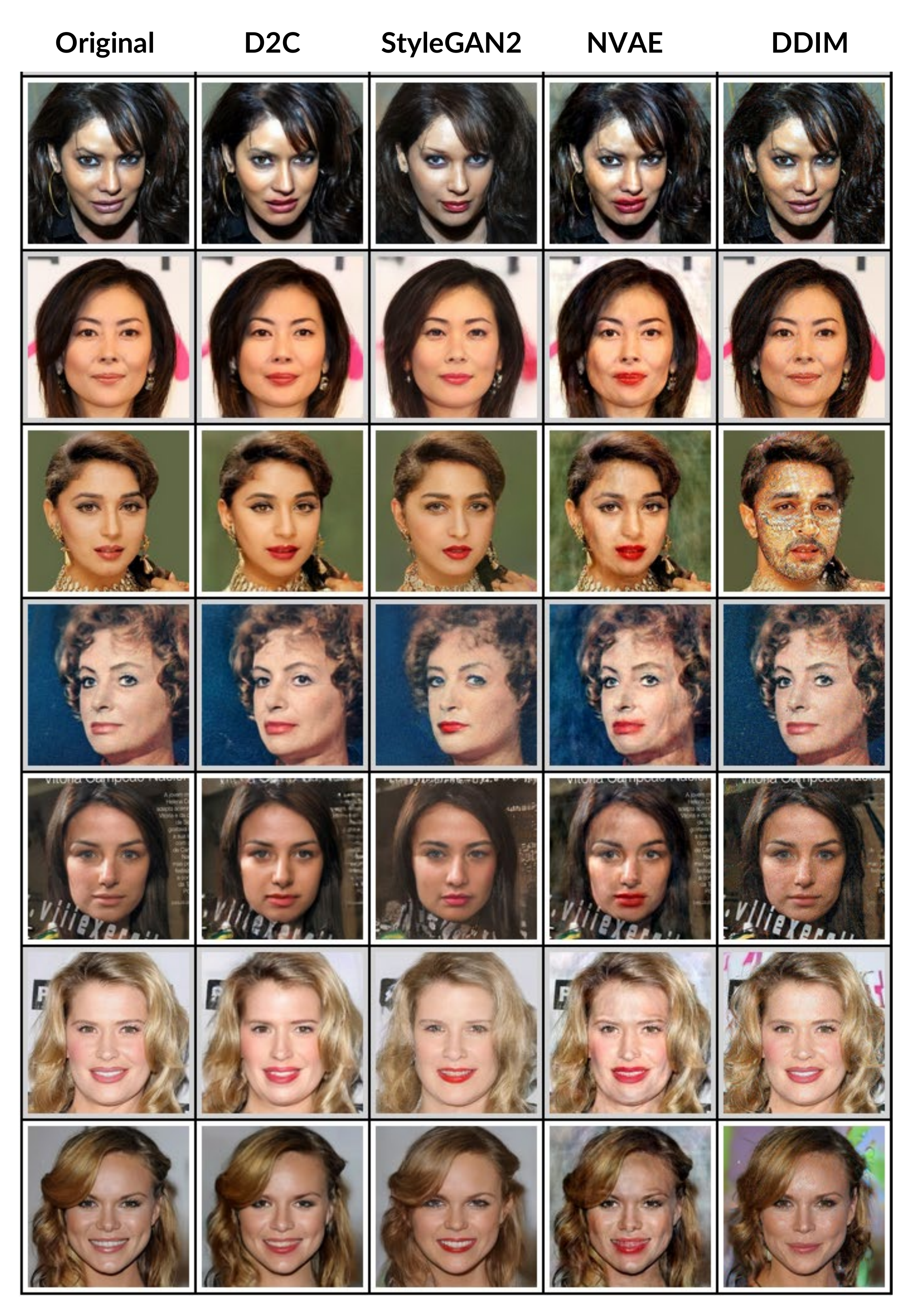}
    \caption{Image manipulation results for \textit{red lipstick}.}
    \label{fig:red-lipstick-large}
\end{figure}

\begin{figure}
    \centering
    \includegraphics[width=\textwidth]{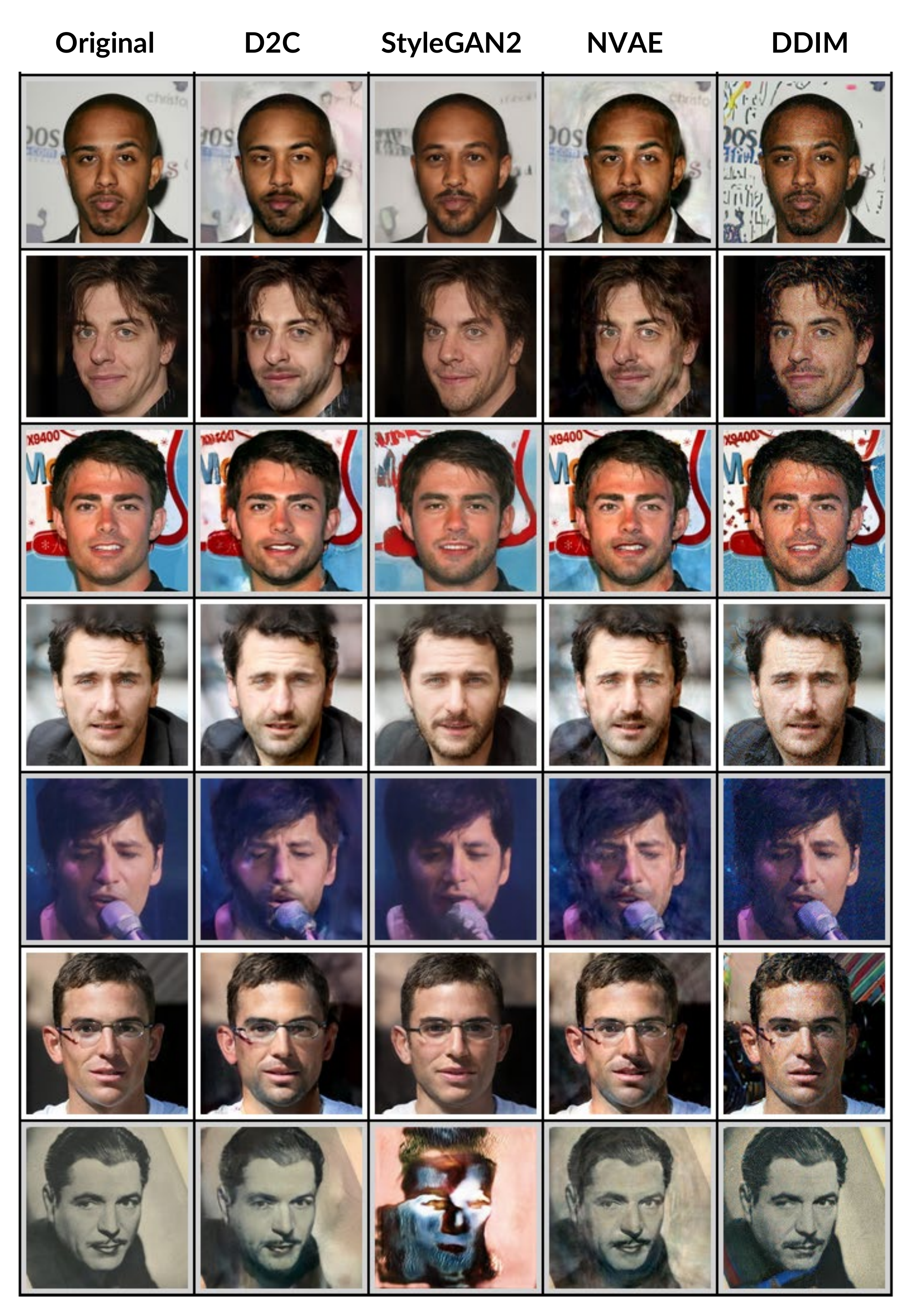}
    \caption{Image manipulation results for \textit{beard}.}
    \label{fig:beard-large}
\end{figure}

\begin{figure}
    \centering
    \includegraphics[width=\textwidth]{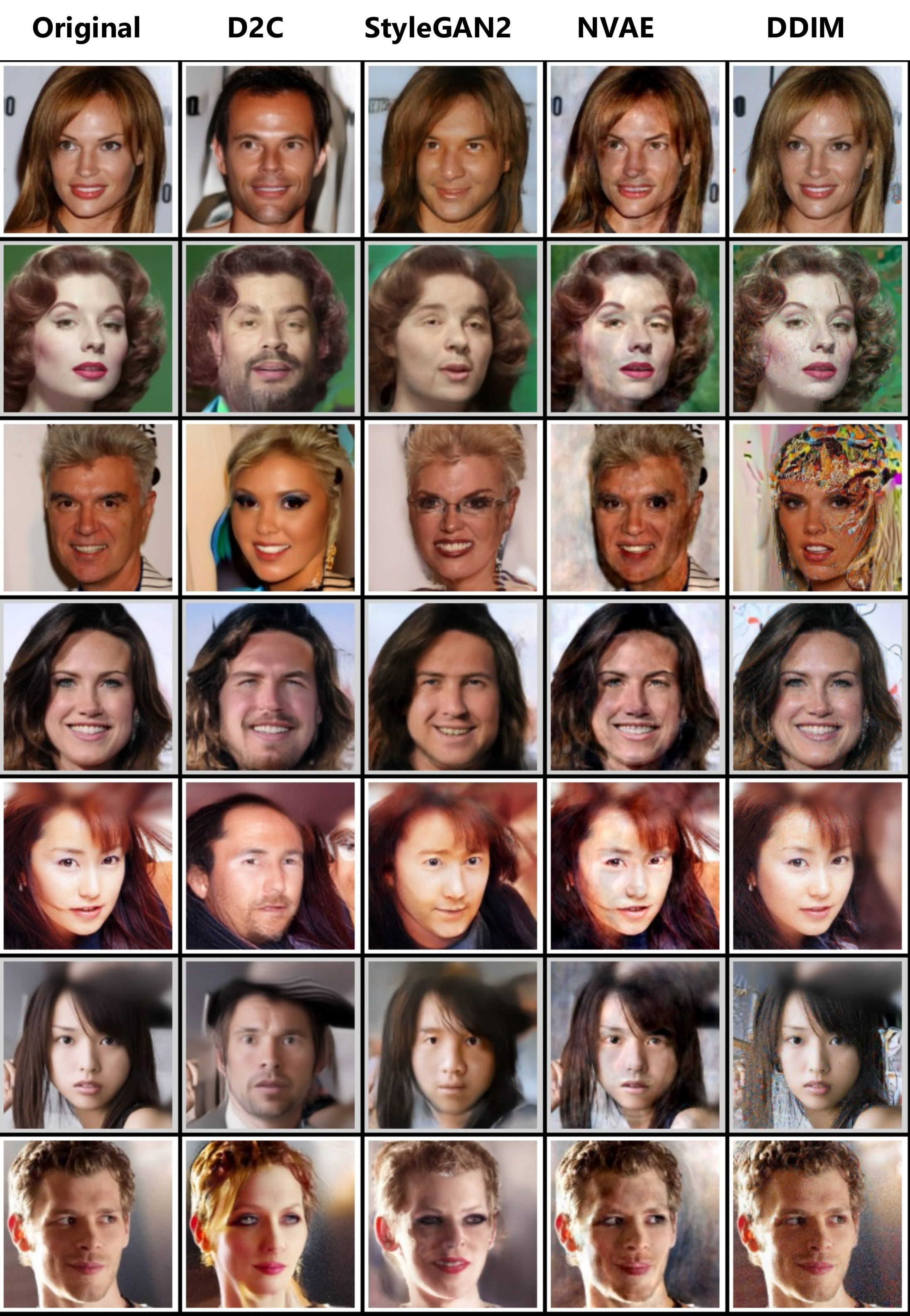}
    \caption{Image manipulation results for \textit{gender}.}
    \label{fig:gender-large}
\end{figure}

\begin{figure*}[!h]
\centering
\begin{subfigure}[b]{0.45\textwidth}
    \includegraphics[width=\textwidth]{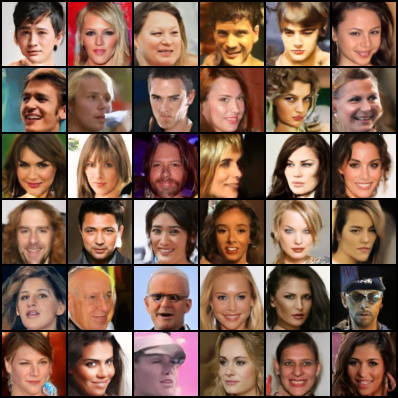}
    \caption{Conditioned on \textit{non-blond} label}
    % \label{fig:contrastive}
\end{subfigure}%
~
\begin{subfigure}[b]{0.45\textwidth}
    \includegraphics[width=\textwidth]{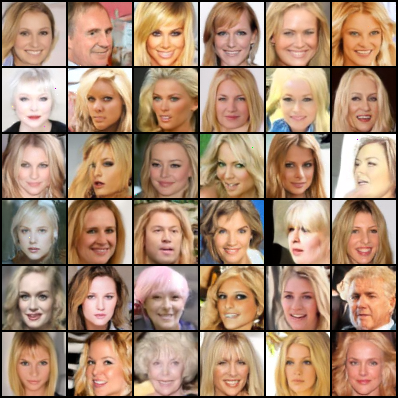}
    \caption{Conditioned on \textit{blond} label}
    % \label{fig:ddim}
\end{subfigure}
\caption{Conditional generation with D2C by learning from 100 labeled examples.}
\label{fig:blond_our}
\end{figure*}

\begin{figure*}[!h]
\centering
\begin{subfigure}[b]{0.45\textwidth}
    \includegraphics[width=\textwidth]{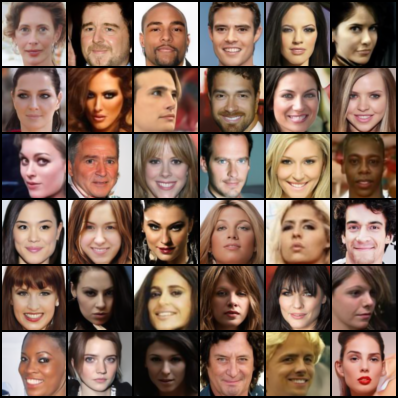}
    \caption{Conditioned on \textit{non-blond} label}
    % \label{fig:contrastive}
\end{subfigure}%
~
\begin{subfigure}[b]{0.45\textwidth}
    \includegraphics[width=\textwidth]{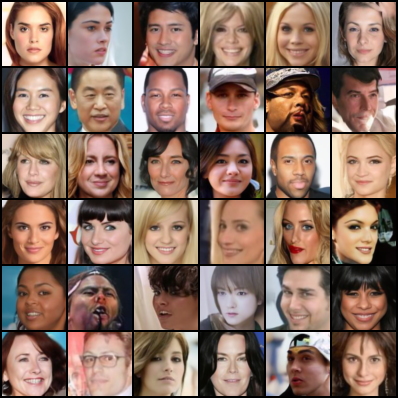}
    \caption{Conditioned on \textit{blond} label}
    % \label{fig:ddim}
\end{subfigure}
\caption{Conditional generation with DDIM by learning from 100 labeled examples.}
\label{fig:blond_ddim}
\end{figure*}

\begin{figure*}[!h]
\centering
\begin{subfigure}[b]{0.45\textwidth}
    \includegraphics[width=\textwidth]{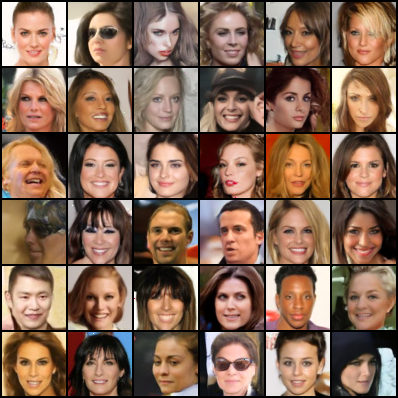}
    \caption{Conditioned on \textit{female} label}
    % \label{fig:contrastive}
\end{subfigure}%
~
\begin{subfigure}[b]{0.45\textwidth}
    \includegraphics[width=\textwidth]{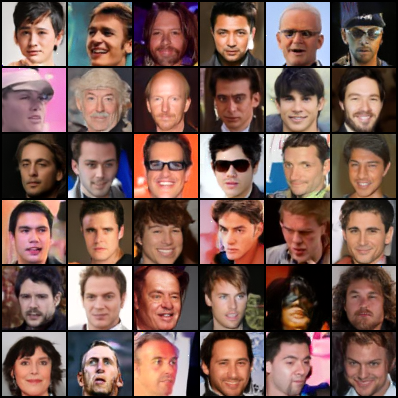}
    \caption{Conditioned on \textit{male} label}
    % \label{fig:ddim}
\end{subfigure}
\caption{Conditional generation with D2C by learning from 100 labeled examples.}
\label{fig:gender_our}
\end{figure*}

\begin{figure*}[!h]
\centering
\begin{subfigure}[b]{0.45\textwidth}
    \includegraphics[width=\textwidth]{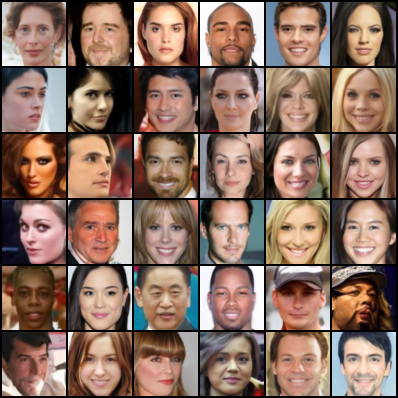}
    \caption{Conditioned on \textit{female} label}
    % \label{fig:contrastive}
\end{subfigure}%
~
\begin{subfigure}[b]{0.45\textwidth}
    \includegraphics[width=\textwidth]{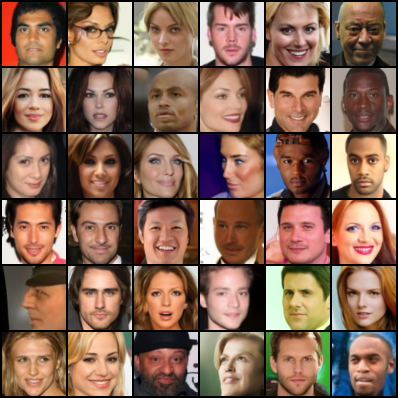}
    \caption{Conditioned on \textit{male} label}
    % \label{fig:ddim}
\end{subfigure}
\caption{Conditional generation with DDIM by learning from 100 labeled examples.}
\label{fig:gender_ddim}
\end{figure*}

\section{Broader Impact}
\label{app:impact}

Recent approaches have trained large vision and language models for conditional generation~\cite{ramesh2021zero}.
However, training such models (\textit{e.g.}, text to image generation) would require vast amounts of resources including data, compute and energy. Our work investigate ideas towards reducing the need to provide paired data (\textit{e.g.}, image-text pairs) and instead focus on using unsupervised data.

Since our generative model tries to faithfully reconstruct training images, there is a potential danger that the model will inherit or exacerbate the bias within the data collection process~\cite{song2018learning}. %
Our method also has the risk of being used in unwanted scenarios such as deep fake. Nevertheless, if we are able to monitor and control how the latent variables are used in the downstream task (which may be easier than directly over images, as the latent variables themselves have rich structure), we can better defend against unwanted use of our models by rejecting problematic latent variables before decoding.

\end{document}